%% file: main.tex
\documentclass{article} 
\usepackage{iclr2027_conference,times}

\input{math_commands.tex}

\usepackage{url}

\title{Dimension-Free Scaling Laws for Size \\Generalization:
An Exponential Separation}

\author{Behrooz Tahmasebi \& Melanie Weber \\
Harvard John A.~Paulson School of Engineering and Applied Sciences (SEAS)\\
Harvard  University, Cambridge, MA 02138, USA \\
\texttt{\{behrooz\_tahmasebi,mweber\}@seas.harvard.edu} 
}

\iclrfinalcopy 

\usepackage[utf8]{inputenc} 
\usepackage[T1]{fontenc}    
\usepackage{url}            
\usepackage{booktabs}       
\usepackage{array}          
\usepackage{wrapfig}        
\usepackage{caption}        
\usepackage{amsfonts}       
\usepackage{nicefrac}       
\usepackage{microtype}      
\usepackage{xcolor}         

\definecolor{darkblue}{rgb}{0.0,0.0,0.65}
\definecolor{darkred}{rgb}{0.68,0.05,0.0}
\definecolor{darkgreen}{rgb}{0.0,0.29,0.29}
\definecolor{darkpurple}{rgb}{0.47,0.09,0.29}
 
\usepackage[backref=page]{hyperref}
\hypersetup{
   colorlinks = true,
   citecolor  = darkblue,
   linkcolor  = darkred,
   filecolor  = darkblue,
   urlcolor   = darkblue,
 }
 \usepackage[capitalize,noabbrev]{cleveref}

\usepackage{amsthm,amsmath,amssymb}

\usepackage{mathtools}

\theoremstyle{plain}
\newtheorem{theorem}{Theorem}[section]

\newtheorem{lemma}[theorem]{Lemma}
\newtheorem{corollary}[theorem]{Corollary}

\theoremstyle{definition}

\theoremstyle{remark}

\usepackage[most]{tcolorbox}

\usepackage{enumitem}

\begin{document}
\addtocontents{toc}{\protect\setcounter{tocdepth}{-1}}

\maketitle
\lhead{Preprint}

\begin{abstract}
Modern machine learning models increasingly operate on variable-size structured data, such as sets and graphs. A central requirement is \emph{size generalization}: the ability of a model trained on inputs of one size to generalize to inputs of a different, potentially much larger, size. Classical statistical learning theory, however, primarily considers fixed input domains, and its guarantees often deteriorate with input dimension. Although extensive work has established when models on sets and graphs can be transferred across input sizes, such results do not characterize the statistical cost of doing so: the sample complexity of size generalization, and its relationship to classical in-domain learning, remain largely unknown. We develop a statistical learning theory of size generalization and establish a sharp exponential separation in sample complexity. Letting \(n\) denote the number of training samples, we show that polynomial source decay is insufficient for efficient size generalization: while it yields error \(n^{-\Theta(1)}\) and hence polynomial sample complexity for in-domain learning, size-generalization error decays only as \((\log n)^{-\Theta(1)}\), yielding an exponential separation. Thus, polynomial in-domain sample complexity can become exponential across sizes. We then identify an exponential source condition under which size generalization achieves error \(n^{-\Theta(1)}\), recovering polynomial sample complexity. These guarantees are independent of input dimension and depend only on the number of training samples, yielding dimension-free scaling laws. We also provide matching minimax lower bounds, establishing optimality. To our knowledge, this work provides the first dimension-free minimax theory of size generalization and a rigorous characterization of when, and at what statistical cost, learning can transfer across input sizes.
\end{abstract}

\section{Introduction}

In many geometric learning problems, the domain is not fixed once and for all. Sets vary in cardinality, graphs vary in the number of nodes, and objects encountered at test time may be larger than those observed during training. The goal is therefore not only to generalize to unseen in-domain samples but also to generalize across domains of different sizes. This desirable property is central to geometric machine learning \citep{bronstein2021geometric,weber2025geometric} and is often studied as \emph{size generalization} for sets and graphs \citep{yehudai2021local,bevilacqua2021size}.

Recent work has made important progress toward understanding learning across sizes, but the available theory remains limited. One line studies transferability through graph limits, graphons, and related stability frameworks \citep{ruiz2020graphon,keriven2020convergence,levie2021transferability,levin2025transferring}, while another develops learning rules across dimensions using representation stability \citep{farb2014representation,levin2024any,levin2023free,diaz2025invariant}. These works offer structural and statistical insights, but leave open a dimension-free minimax characterization of sample complexity across sizes.


The ability to process variable-size inputs does not by itself close this gap. Deep Sets \citep{zaheer2017deep} and graph neural networks \citep{scarselli2008graph}, for example, can be applied to sets and graphs of varying sizes and may even satisfy transferability guarantees: under suitable regularity conditions, such as Lipschitz continuity, a fixed model produces similar outputs on objects of different sizes. But such guarantees do not determine how the transferred error decreases with the number \(n\) of training samples, whether this rate is uniform over the training and test sizes, or how it compares with generalization at the training size. We therefore ask: given \(n\) samples at size \(d\), can we control the generalization error simultaneously at every target size \(D\ge d\), using a number of samples independent of both \(d\) and \(D\)?


To address this question, we return to a classical idea. In classical learning theory, it is standard to impose a \emph{source condition} on the desired task, since learning over an arbitrary space is impossible by a standard no-free-lunch argument. The source condition that enables the in-domain generalization error to decrease as \(n^{-\Theta(1)}\) is called polynomial source decay. Roughly speaking, it requires the Fourier coefficients of the target function to decay polynomially with their index. This condition is necessary and sufficient for efficient classical in-domain learning. Motivated by this phenomenon, we ask the following question:
\begin{tcolorbox}
\centering
\emph{Do the conditions for efficient in-domain learning also enable efficient size generalization?}
\end{tcolorbox}

\begin{wraptable}{r}{0.56\textwidth}
    \centering
    \small
    \captionsetup{font=small,labelfont=bf,skip=3pt}
    \caption{The statistical cost of size generalization.
    }
    \label{tab:size-generalization-rates}
    \begin{tabular}{@{}lcc@{}}
    \toprule
    Source decay & In-domain error & Size-generalization error \\
    \midrule
    Polynomial & \(n^{-\Theta(1)}\) & \((\log n)^{-\Theta(1)}\) \\
    Exponential & \(n^{-\Theta(1)}\) & \(n^{-\Theta(1)}\) \\
    \bottomrule
    \end{tabular}
    \vspace{-0.8em}
    \end{wraptable}

We therefore ask whether polynomial source decay remains sufficient for size generalization. Our main finding is a sharp separation: under polynomial source decay, the size-generalization error decreases only as \((\log n)^{-\Theta(1)}\). Consequently, an accuracy attainable with polynomially many samples for in-domain learning under the same condition may require exponentially many samples when the task must generalize across input sizes.


Importantly, our resulting guarantees are \emph{dimension-free}: they depend on the number of training samples, but not on the input size \(d\) (i.e., either the cardinality of a set or the number of nodes in a graph).
In particular, classical transferability errors are typically controlled by the smaller object size, scaling as \(d^{-\Theta(1)}\) when \(D\ge d\). By contrast, our generalization gap converges to zero with the number of training samples \(n\) and is uniform over both \(d\) and \(D\), albeit at a slow rate.

To overcome this limitation, we then identify when the exponential cost can be avoided. Under \emph{exponential source decay}, which is stronger than polynomial source decay, we show that the size-generalization error improves from \((\log n)^{-\Theta(1)}\) to \(n^{-\Theta(1)}\), recovering polynomial sample complexity. Therefore, our main finding is:
\begin{tcolorbox}
\centering
\emph{Efficient size generalization requires stronger regularity than in-domain learning; polynomial decay is sufficient for the latter, whereas exponential decay is needed for the former.}
\end{tcolorbox}
Table~\ref{tab:size-generalization-rates} summarizes this contrast. Exponential decay arises naturally for heat (Gaussian) kernels, analytic functions, and local graph functions.

Finally, we prove matching minimax lower bounds for both sets and graphs. The polynomial-versus-exponential separation is therefore not caused by a particular algorithmic issue. It is an inherent statistical limitation of size generalization over the corresponding source classes. Together, the upper and lower bounds provide a dimension-free minimax characterization of when learning can generalize across input sizes and at what sample cost.

In short, in this paper we make the following contributions:
\begin{itemize}[leftmargin=1.5em]
    \item We develop dimension-free scaling laws for invariant learning and estimation on sets and graphs, with rates that hold uniformly across input sizes (Section~\ref{sec:main-results} and Theorem~\ref{thm:size-generalization-oracle}).

    \item We show that size generalization is fundamentally harder than classical in-domain statistical learning: under polynomial source decay, in-domain error is \(n^{-\Theta(1)}\), while size-generalization error is only \((\log n)^{-\Theta(1)}\), implying an exponential separation in sample complexity (Theorem~\ref{thm:size-generalization-oracle} and Corollary~\ref{cor:sample-complexity-separation}).

    \item We show that under exponential source decay, size-generalization error becomes \(n^{-\Theta(1)}\), recovering polynomial sample complexity independently of object size, at the cost of stronger regularity (Theorem~\ref{thm:size-generalization-oracle}).

    \item We prove that the rates for both sets and graphs are minimax optimal, establishing that the separation is intrinsic rather than algorithm-dependent (Theorem~\ref{thm:minimax-size-generalization}).
\end{itemize}

\section{Related Work}

\paragraph{Transferability and size generalization.}
Transferability across graph sizes has been studied through graph limits, graphons, graphops, and higher-order limits \citep{ruiz2020graphon,keriven2020convergence,levie2021transferability,ruiz2023transferability,maskey2023transferability,cai2022convergence,le2023limits,herbst2025higher}. Related work studies learning across set cardinalities and learning-theoretic formulations of graph size generalization \citep{zweig2021functional,maehara2021learning}. These works primarily characterize approximation, consistency, or stability across sizes. In contrast, we ask when an estimator learned at one size achieves statistical convergence rates that are uniform over all test sizes.

\paragraph{Representation stability and any-dimensional learning.}
Our work builds on representation stability \citep{farb2014representation}, which describes how structured families of representations stabilize across dimensions. Representation stability has recently been used in machine learning to design and analyze architectures for learning under symmetries \citep{levin2024any}, optimization across dimensions \citep{levin2023free}, graph limits \citep{levin2026limitsweight}, scientific computing and PDEs \citep{phan2025towards}, and finite-dimensional kernel learning under invariances \citep{diaz2025invariant}.
The work of \citet{levin2025transferring}, which establishes size-generalization bounds in a broad framework encompassing sets and graphs, is most closely related to ours. Although their framework provides transfer and generalization guarantees across dimensions, the resulting rates depend on dimension-dependent complexity measures, such as covering numbers of the domains. In contrast, we identify conditions under which convergence rates are independent of object size and characterize the tradeoff between invariant feature dimension growth and target approximation. To the best of our knowledge, our work is the first to establish this sharp separation between slow and fast regimes together with matching minimax optimality guarantees. For a broader discussion of related work, see the appendix.

\section{Problem Statement}\label{sec:problem-statement}

\paragraph{Sets and (attributed) graphs.}
We study problems indexed by the object size \(d\), the number of elements in a set or nodes in a graph, and fix a feature dimension \(r\in\mathbb N\). A set is represented by \(X=(x_1,\ldots,x_d)\in(\mathbb R^r)^d\simeq\mathbb R^{r\times d}\), while an attributed graph is represented by \(X\in\mathbb R^{r\times d\times d}\), with \(r\) edge and/or node-feature channels. Accordingly, \(\mathcal X_d\) denotes \(\mathbb R^{r\times d}\) or \(\mathbb R^{r\times d\times d}\). These matrix and tensor representations are ordered, but the underlying sets and graphs are not. We equip \(\mathcal X_d\) with the standard Gaussian probability measure \(\lambda_d\), under which the vectorized entries are independent \(\mathcal N(0,1)\). This measure is permutation invariant and has positive Lebesgue density everywhere, so it can serve as a reference for any distribution with a Lebesgue density; we choose it because Hermite polynomials form a convenient orthogonal basis.
 
\paragraph{Statistical learning at a fixed size.}
We consider unsupervised density estimation and supervised regression together. Write \(Z_i=X_i\) in the unsupervised setting, where \(X_i\overset{\mathrm{i.i.d.}}{\sim}\mu_d\) and \(d\mu_d=f_d^\star d\lambda_d\), and write \(Z_i=(X_i,Y_i)\) in the supervised setting, where \((X_i,Y_i)\overset{\mathrm{i.i.d.}}{\sim}\rho_d\), \(Y_i\in\mathbb R\), the \(X\)-marginal is \(\mu_d\), and \(f_d^\star(x)=\mathbb E[Y\mid X=x]\). Let \(\mathcal H_d=L^2(\lambda_d)\) for density estimation and \(\mathcal H_d=L^2(\mu_d)\) for regression, and assume \(f_d^\star\in\mathcal F_d\subseteq\mathcal H_d\) is the target function to estimate. For the corresponding loss \(\ell_d(Z,f)\), define
\[
\widehat L_{n,d}(f)=\frac1n\sum_{i=1}^n\ell_d(Z_i,f),
\qquad
L_d(f)=\mathbb E[\ell_d(Z,f)].
\]
We assume that \(\ell_d\) identifies \(f_d^\star\): the target has finite population risk and is the unique minimizer of \(L_d\) over \(\mathcal F_d\) as an element of \(L^2\). This uniqueness is functional, not parametric. Many parameter values may represent the same minimizing function, so the assumption is fully compatible with overparameterized models. For example, negative log-likelihood identifies the true density, while squared loss identifies the conditional mean whenever \(\mathbb E[Y^2]<\infty\); more generally, proper losses identify their associated population targets. Because \(\mu_d\) or \(\rho_d\) is unknown, \(L_d\) cannot be evaluated directly, and empirical risk minimization provides its data-driven counterpart.

Although we state the main formulation for scalar invariant targets, the framework also covers permutation-equivariant vector-valued targets after replacing the invariant spaces and feature counts by their equivariant counterparts; Appendix~\ref{app:m-estimation-examples} gives score matching and denoising as examples. This extension is at the level of function spaces: it requires exact norm-controlled transfer, stable equivariant feature growth, and the corresponding source condition, and its rates are governed by the relevant equivariant feature counts.

\paragraph{Permutation invariance.}
Let \(S_d\) be the symmetric group. A permutation \(\pi\in S_d\) acts on sets by \((\pi\cdot X)_{:,j}=X_{:,\pi^{-1}(j)}\) and on graphs by \((\pi\cdot X)_{a,i,j}=X_{a,\pi^{-1}(i),\pi^{-1}(j)}\). Because elements and nodes are unlabeled, we assume \(\pi\cdot X\overset{\mathrm{law}}{=}X\) for density estimation and \((\pi\cdot X,Y)\overset{\mathrm{law}}{=}(X,Y)\) for regression. Invariance of \(\mu_d\) and \(\lambda_d\), together with uniqueness of the density, gives \(f_d^\star(\pi\cdot x)=f_d^\star(x)\) almost everywhere in the first case; invariance of the conditional law gives the same conclusion for the regression function. Thus \(\mathcal F_d^{\mathrm{inv}}\coloneqq\{f\in\mathcal F_d:f(\pi\cdot x)=f(x)\text{ a.e. for every }\pi\in S_d\}\), and the invariant empirical risk minimizer is
\[
\hat f_{n,d}^{\mathrm{inv}}
\in
\arg\min_{f\in\mathcal F_d^{\mathrm{inv}}}
\widehat L_{n,d}(f).
\]
Appendix~\ref{app:group-actions} provides the formal background.

\paragraph{Exact functional transfer across sizes.}
Let \(\mathcal H_d^{\mathrm{inv}}\subseteq\mathcal H_d\) be the Hilbert space of invariant functions containing \(\mathcal F_d^{\mathrm{inv}}\), with the corresponding \(L^2\) norm \(\|\cdot\|_d\). For \(d\le D\), let
\[
T_{d\to D}:\mathcal H_d^{\mathrm{inv}}\longrightarrow
\mathcal H_D^{\mathrm{inv}}
\]
be a linear operator satisfying \(T_{d\to D}(\mathcal F_d^{\mathrm{inv}})\subseteq\mathcal F_D^{\mathrm{inv}}\). It maps an entire rule, independently of its parameterization, from size \(d\) to size \(D\). We assume coherence, \(T_{d\to d}=\mathrm{Id}\) and \(T_{D\to E}\circ T_{d\to D}=T_{d\to E}\) for \(d\le D\le E\); exact target transfer, \(T_{d\to D}f_d^\star=f_D^\star\); and norm preservation, \(\|T_{d\to D}f-T_{d\to D}h\|_D=\|f-h\|_d\).
Thus the targets at different sizes are realizations of a single size-consistent rule, with no transfer mismatch or approximation error between sizes. This consistency assumption is standard and essential in the size-generalization literature \citep{levin2025transferring}.

This is the function-level abstraction of the shared rules used by Deep Sets, GNNs, invariant feature expansions, and graphon representations \citep{zaheer2017deep,xu2018powerful,ruiz2020graphon,levin2024any,levin2025transferring}. The map need not cover every function at size \(D\); it only carries the same rule to the larger domain while preserving its error scale. Without such a connection, size-\(d\) data would contain no information about an arbitrary size-\(D\) target. Appendices~\ref{app:any-dimensional-models} and~\ref{app:sets-graphs-examples} give formal definitions and concrete constructions.

\paragraph{A concrete picture.}
An invariant rule can be viewed as a weighted combination of patterns whose meaning does not change with object size. For a set, a pattern might measure an average feature or an interaction among a few elements; for a graph, it might measure a node or edge feature or the frequency of a small subgraph. Let \(\psi_{d,j}\) denote the normalized measurement of pattern \(j\) on an object of size \(d\). The same index \(j\) refers to the same pattern at every size. A rule \(\sum_j a_j\psi_{d,j}\) is therefore transferred to size \(D\) simply by reusing the learned weights:
\[
T_{d\to D}\!\left(\sum_j a_j\psi_{d,j}\right)
=\sum_j a_j\psi_{D,j}.
\]
With a suitable normalization, the squared difference between two rules is exactly the squared difference between their weights. Reusing the weights at a new size therefore preserves prediction error; formally, this is the orthonormal-basis construction and Parseval's identity. Exact target transfer then has a simple meaning: the true task uses the same pattern weights \((a_j)_j\) at every size. Thus, the assumption does not identify arbitrary functions across dimensions; it only identifies the same invariant rule evaluated on different numbers of elements or nodes. Appendices~\ref{app:any-dimensional-models} and~\ref{app:sets-graphs-examples} construct these features explicitly for sets and graphs.

\paragraph{The statistical objective: generalization across size.}
Training uses only \(n\) samples at size \(d\). At any target size \(D\ge d\), the learned function is \(T_{d\to D}\hat f_{n,d}^{\mathrm{inv}}\). Our goal is a guarantee whose sample complexity is independent of both sizes:
\begin{tcolorbox}
\textbf{Dimension-free size generalization.}
For every accuracy \(\varepsilon>0\) and confidence level \(1-\delta\), find \(n=n(\varepsilon,\delta)\), independent of \(d\) and \(D\), such that, with probability at least \(1-\delta\),
\[
\sup_{D\ge d}
\bigl\|
T_{d\to D}\hat f_{n,d}^{\mathrm{inv}}-f_D^\star
\bigr\|_D^2
\le\varepsilon.
\]
\end{tcolorbox}
The supremum is essential: a single estimator, learned at one size, must remain accurate simultaneously at every larger size. The remainder of the paper characterizes exactly when this is statistically possible and how the required number of samples scales with the regularity of the target.

\section{Main Results}
\label{sec:main-results}

We now give the dimension-free rates for statistical learning on sets and graphs. The result has two ingredients: representation stability determines how many invariant interactions exist at each order, while target regularity determines how much of the learning task remains beyond that order. Combining them yields a sharp phase transition between nearly parametric and logarithmic size generalization.

\subsection{Invariant polynomial coordinates and representation stability}

Recall the domain \(\mathcal X_d\), target class \(\mathcal F_d\), and Hilbert space \(\mathcal H_d\) from Section~\ref{sec:problem-statement}. Thus, \(\mathcal F_d\subseteq\mathcal H_d^{\mathrm{inv}}\), with \(\mathcal H_d=L^2(\lambda_d)\) for density estimation and \(\mathcal H_d=L^2(\mu_d)\) for regression.

To organize this infinite-dimensional space, let \(\mathcal V_{d,k}\) be the span of all invariant polynomials of degree at most \(k\) on \(\mathcal X_d\). Under the standard completeness conditions stated in the appendix, polynomials are dense in \(\mathcal H_d\); averaging over relabelings then makes invariant polynomials dense in \(\mathcal H_d^{\mathrm{inv}}\). The polynomial description therefore does not assume that the target itself is a polynomial. It supplies a complete, transfer-compatible hierarchy in which degree records interaction complexity: low degrees describe low-order interactions, while increasing \(k\) progressively resolves richer dependencies among elements or nodes.

An orthonormal basis for \(\mathcal V_{d,k}\) is obtained by applying Gram--Schmidt in \(\mathcal H_d\), with inner product \(\langle\cdot,\cdot\rangle_d\).
Orthogonalization changes only the coordinates, not the polynomial span or its dimension. The resulting nested spaces
\[
\mathcal V_{d,0}\subseteq\mathcal V_{d,1}\subseteq\cdots
\subseteq\mathcal H_d^{\mathrm{inv}}
\]
are compatible with the exact transfer operators, and representation stability controls their complexity uniformly over object size. Specifically, for \(\mathsf X\in\{\mathrm{set},\mathrm{graph}\}\), define
\[
q_{\mathsf X}(k)
\coloneqq
\sup_{d\ge1}\dim(\mathcal V_{d,k}).
\]
For every fixed \(k\), the dimensions increase to a stable value, so the supremum equals that value and bounds every size, including the pre-stable regime \citep{farb2014representation}.

For sets \(X\in\mathbb R^{r\times d}\), these are multisymmetric polynomials. Hardy--Ramanujan and Meinardus partition asymptotics, detailed in Appendix~\ref{app:sets-multisymmetric-polynomials}, give
\[
q_{\mathrm{set}}(k)
=
\exp\!\left(C_r k^{\beta}+o(k^\beta)\right),
\qquad
\beta=\frac{r}{r+1}.
\]
Here \(C_r=\frac{r+1}{r}\bigl(r\zeta(r+1)\bigr)^{1/(r+1)}\), where \(\zeta\) is the Riemann zeta function.
For attributed graphs \(X\in\mathbb R^{r\times d\times d}\) with at least one edge channel, invariant polynomials are indexed by unlabeled attributed multigraph patterns. The enumeration in Appendix~\ref{app:attributed-graph-invariants} gives
\[
q_{\mathrm{graph}}(k)
=
\exp\!\left(k\log k+O_r(k\log\log k)\right).
\]
For fixed \(r\), the graph feature dimension depends on \(r\) only through the lower-order term \(O_r(k\log\log k)\). Consequently, \(r\) does not enter the first-order graph asymptotics or the graph rates below. Both feature counts are uniform in the number of elements or nodes, but the number of interactions grows rapidly with degree and substantially faster for graphs.

\subsection{Polynomial and exponential source decay}

Let \(\mathcal F_{d,k}=\mathcal V_{d,k}\) for regression and \(\mathcal F_{d,k}=\mathcal F_d^{\mathrm{inv}}\cap\mathcal V_{d,k}\) for density estimation, so density approximants remain nonnegative and normalized. Define the best order-\(k\) approximation error and its uniform envelope by
\[
B_d(k)
\coloneqq
\inf_{f\in\mathcal F_{d,k}}\|f_d^\star-f\|_d^2,
\qquad
B(k)
\coloneqq
\sup_{d\ge1}B_d(k).
\]
For regression, the optimizer is the orthogonal projection \(\Pi_{d,k}f_d^\star\); hence \(B(k)\) is the squared energy in coordinates of degree greater than \(k\), uniformly over \(d\). The density definition is its admissible analogue. In both cases, this is an invariant Fourier or spectral source condition: the basis is complete, while coefficient decay quantifies how strongly the target depends on high-order interactions.

We use the two source conditions introduced in the paper:
\[
\begin{array}{lll}
\textit{polynomial source decay:}
& B(k)\lesssim k^{-\tau}, & \tau>0,\\[2pt]
\textit{exponential source decay:}
& B(k)\lesssim \exp(-a k^\nu), & a,\nu>0.
\end{array}
\]
Polynomial decay is the analogue of finite smoothness: increasingly complex interactions contribute progressively less, but remain present at every order. Exponential decay describes substantially stronger regularity. The exponent \(\nu\) determines whether this decay is fast enough to overcome the growth of invariant interactions.

\subsection{Dimension-free scaling laws}

For an interaction order \(k\), let \(f_{d,k}^{\circ}\) minimize the population risk over \(\mathcal F_{d,k}\), and let \(\hat f_{n,d,k}^{\mathrm{inv}}\) be its empirical counterpart. We impose standard, primitive regularity conditions on this finite-dimensional problem. In orthonormal coordinates on \(\mathcal V_{d,k}\), the population approximation satisfies \(\|f_{d,k}^{\circ}-f_d^\star\|_d^2\lesssim B_d(k)\); the empirical loss has curvature at least \(\alpha>0\) on the relevant neighborhood; and the centered empirical score at \(f_{d,k}^{\circ}\) is sub-Gaussian with parameter at most \(\sigma^2\). The constants \(\alpha\) and \(\sigma\) are uniform in \(d\) and \(k\). Here, curvature means that moving the fitted function by \(h\) increases the loss by at least \(\alpha\|h\|_d^2/2\), up to its linear term; it rules out statistically invisible directions.

These conditions are directly checkable for the canonical losses in our framework. For squared-loss regression,
\[
L_d(f)-L_d(f_d^\star)=\|f-f_d^\star\|_{L^2(\mu_d)}^2,
\]
so approximation is exactly measured by \(B_d(k)\). Curvature follows from concentration of the empirical Gram matrix, and score concentration follows from sub-Gaussian regression noise and whitened invariant features. For density estimation by negative log-likelihood, the Hessian is uniformly positive and the likelihood is locally equivalent to squared \(L^2(\lambda_d)\) error when candidate densities are bounded above and away from zero; bounded or sub-Gaussian invariant scores then give the same concentration. Regular proper losses satisfy the analogous calibration and Hessian conditions. Appendix~\ref{app:proof-invariant-m-estimation} derives the finite-dimensional estimation bound from these assumptions rather than postulating it.

Table~\ref{tab:main-minimax-rates} compares our dimension-free rates with classical in-domain learning. At a fixed size \(d\), the number of degree-at-most-\(k\) invariant polynomial features grows as \(k^{rd}\) for sets and \(k^{rd^2}\) for graphs under our tensor representation; invariance under the finite group \(S_d\) changes constants but not these exponents. Constants and lower-order factors are suppressed throughout.

\begin{table}[t]
\centering
\small
\setlength{\tabcolsep}{4pt}
\renewcommand{\arraystretch}{1.35}
\caption{Minimax-optimal in-domain and dimension-free size-generalization rates.}
\label{tab:main-minimax-rates}
\begin{tabular}{@{}l>{\centering\arraybackslash}p{0.22\linewidth}>{\centering\arraybackslash}p{0.23\linewidth}>{\centering\arraybackslash}p{0.23\linewidth}@{}}
\toprule
\textbf{Source decay} & \textbf{In-domain} & \textbf{Sets} & \textbf{Graphs} \\
\midrule
Polynomial, \(k^{-\tau}\)
& Sets: \(n^{-\tau/(\tau+rd)}\)\newline Graphs: \(n^{-\tau/(\tau+rd^2)}\)
& \((\log n)^{-\tau(r+1)/r}\)
& \((\log n/\log\log n)^{-\tau}\) \\
Exponential, \(e^{-ak^\nu}\)
& Both: \(n^{-1+o(1)}\) 
& \(n^{-1+o(1)}\) if \(\nu>\frac{r}{r+1}\)
& \(n^{-1+o(1)}\) if \(\nu>1\) \\
\bottomrule
\end{tabular}
\end{table}

At and below the exponential thresholds, the rates interpolate between the two rows of the table. The exact critical and subcritical expressions are derived in Appendix~\ref{app:proof-scaling-laws}; the main distinction is whether exponential source decay is fast enough to recover a nearly parametric rate.

\begin{theorem}[Dimension-free scaling laws for sets and graphs]
\label{thm:size-generalization-oracle}
\label{thm:invariant-m-estimation}
Assume exact isometric transfer, representation stability, compatibility \(T_{d\to D}(\mathcal F_{d,k})\subseteq\mathcal F_{D,k}\), and the uniform loss regularity above. For any fixed confidence level, there are invariant empirical estimators, trained using only size-\(d\) data, whose squared error holds simultaneously for every \(D\ge d\) and satisfies the following rates, independently of \(d\) and \(D\):
\begin{enumerate}[label=\textup{(\roman*)},leftmargin=2.1em]
    \item If \(B(k)\lesssim k^{-\tau}\), the rate is
    \[
    (\log n)^{-\tau(r+1)/r}\quad\text{for sets},
    \qquad
    \left(\frac{\log n}{\log\log n}\right)^{-\tau}\quad\text{for graphs}.
    \]
    \item If \(B(k)\lesssim e^{-ak^\nu}\), the set rate, with \(\beta=r/(r+1)\), is
    \[
    \begin{cases}
    n^{-1+o(1)}, & \nu>\beta,\\
    n^{-a/(a+C_r)+o(1)}, & \nu=\beta,\\
    \exp\!\left[-aC_r^{-\nu/\beta}(\log n)^{\nu/\beta}(1+o(1))\right], & 0<\nu<\beta,
    \end{cases}
    \]
    while the graph rate is \(n^{-1+o(1)}\) for \(\nu>1\) and
    \[
    \exp\!\left[-a\left(\frac{\log n}{\log\log n}\right)^\nu(1+o(1))\right]
    \quad\text{for }0<\nu\le1.
    \]
\end{enumerate}
The statements apply to squared-loss regression, regular likelihood density estimation, and other proper losses satisfying the stated calibration, curvature, and score conditions.
\end{theorem}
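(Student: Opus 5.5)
The plan is a bias--variance argument at a tuned interaction order $k$, combined with exact isometric transfer so that a bound proved at size $d$ holds uniformly over $D$.

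\textbf{Step 1: reduce to size $d$.} Take the sieve estimator $\hat f_{n,d,k}^{\mathrm{inv}}$, the empirical risk minimizer over $\mathcal F_{d,k}$. Compatibility $T_{d\to D}(\mathcal F_{d,k})\subseteq\mathcal F_{D,k}$ keeps the transferred estimator in the order-$k$ class at every larger size. Exact target transfer $T_{d\to D}f_d^\star=f_D^\star$ together with norm preservation gives
\[
\bigl\|T_{d\to D}\hat f_{n,d,k}^{\mathrm{inv}}-f_D^\star\bigr\|_D^2=\bigl\|\hat f_{n,d,k}^{\mathrm{inv}}-f_d^\star\bigr\|_d^2 .
\]
So the supremum over $D\ge d$ equals the size-$d$ error, and a single high-probability event at size $d$ serves every $D$ simultaneously.

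\textbf{Step 2: error decomposition at size $d$.} Write
\[
\|\hat f-f_d^\star\|_d^2\le 2\|\hat f-f_{d,k}^\circ\|_d^2+2\|f_{d,k}^\circ-f_d^\star\|_d^2 .
\]
The second term is $\lesssim B_d(k)\le B(k)$ by assumption. For the first term, work in orthonormal coordinates on $\mathcal V_{d,k}$, a space of dimension $m\le q_{\mathsf X}(k)$ by representation stability.
\begin{itemize}
\item A Taylor expansion with curvature $\alpha$ and optimality of $\hat f$ gives $\tfrac{\alpha}{2}\|\hat f-f^\circ\|^2\le\langle g_n,\hat f-f^\circ\rangle$, where $g_n$ is the centered empirical score.
\item Hence $\|\hat f-f^\circ\|\le 2\|g_n\|/\alpha$.
\item A sub-Gaussian norm bound in $m$ dimensions gives $\|g_n\|^2\lesssim \sigma^2(m+\log(1/\delta))/n$.
\end{itemize}
Combining, with fixed confidence,
\[
\text{err}\lesssim \frac{q_{\mathsf X}(k)}{n}+B(k).
\]
The constants are uniform in $d$ and $k$. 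One localization subtlety must be handled: the curvature holds only on a neighborhood. I would treat it with a standard convexity/peeling argument showing $\hat f$ lies in that neighborhood once $q(k)/n$ is small, which is the regime chosen below.

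\textbf{Step 3: optimize $k$, case by case.} Insert $\log q_{\mathrm{set}}(k)=C_rk^\beta(1+o(1))$ and $\log q_{\mathrm{graph}}(k)=k\log k(1+o(1))$.
\begin{itemize}
\item \emph{Polynomial decay.} Choose $k$ with $q(k)=n^{1-\epsilon}$, i.e.\ $k\approx(\log n/C_r)^{1/\beta}$ for sets and $k\approx\log n/\log\log n$ for graphs. The variance is then $n^{-\epsilon}$, negligible, and the bias $k^{-\tau}$ gives $(\log n)^{-\tau(r+1)/r}$ and $(\log n/\log\log n)^{-\tau}$. Taking $\epsilon=\epsilon(n)\to0$ slowly, e.g.\ $\epsilon=\log\log n/\log n$ times a constant, keeps the constants clean.
\item \emph{Exponential decay, supercritical} ($\nu>\beta$, resp.\ $\nu>1$). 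Choose $k$ with $ak^\nu=\log n$. Then $\log q(k)=o(\log n)$, so the variance is $n^{-1+o(1)}$ and the bias is $n^{-1}$.
\item \emph{Critical sets} ($\nu=\beta$). Balance $e^{C_rk^\beta}/n=e^{-ak^\beta}$, i.e.\ $k^\beta=\log n/(a+C_r)$, giving $n^{-a/(a+C_r)+o(1)}$.
\item \emph{Subcritical} ($\nu<\beta$, resp.\ $\nu\le1$). Take $k$ with $q(k)=n^{1-o(1)}$ as in the polynomial case. The bias $\exp(-ak^\nu)$ with $k=(\log n/C_r)^{1/\beta}(1+o(1))$ gives $\exp[-aC_r^{-\nu/\beta}(\log n)^{\nu/\beta}(1+o(1))]$, and the variance $n^{-1+o(1)}$ is smaller. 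For graphs, $k=(\log n/\log\log n)(1+o(1))$ similarly. At $\nu=1$ for graphs, $ak$ versus $k\log k$ still makes the bias dominant, so the same formula holds.
\end{itemize}

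\textbf{Main obstacle.} The hardest part is the uniform finite-dimensional estimation bound in Step 2 with $m$ growing super-polynomially in $k$. This means ensuring that the curvature and localization hold when $q(k)$ is nearly $n$. I would first try a one-shot argument via convexity of the loss restricted to $\mathcal V_{d,k}$, so that a bound on the boundary of a ball suffices. A secondary nuisance is tracking the $o(1)$ terms through the asymptotics of $q$ so the exponents come out exactly as stated.
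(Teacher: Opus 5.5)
Your route matches the paper's proof. Exact isometric transfer collapses $\sup_{D\ge d}$ to the size-$d$ error on a single event. Strong convexity plus a net-based sub-Gaussian score bound gives the oracle inequality $B(k)+(q_{\mathsf X}(k)+\log(1/\delta))/n$, and $k$ is then tuned. The paper avoids your localization worry by assuming $\alpha$-strong convexity of the empirical risk on the whole finite-dimensional model, so your extra convexity argument is added care, not a gap. Your polynomial case works provided the constant $c$ in $\epsilon=c\log\log n/\log n$ is at least $\tau/\beta$ (resp.\ $\tau$ for graphs). Your supercritical and critical calculations are correct.

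The subcritical step, however, does not go through as written. If $q(k)=n^{1-\epsilon_n}$, the variance term is $q(k)/n=n^{-\epsilon_n}$, not $n^{-1+o(1)}$. In this regime the target rate $\exp[-aC_r^{-\nu/\beta}(\log n)^{\nu/\beta}]$ is itself $n^{-o(1)}$, so everything depends on how slowly $\epsilon_n\to0$.

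With the choice you carry over from the polynomial case, $\epsilon_n\asymp\log\log n/\log n$, the variance is $(\log n)^{-c}$. This is far larger than the bias and would yield only a polylogarithmic rate. You need $\epsilon_n\log n\ge 2aC_r^{-\nu/\beta}(\log n)^{\nu/\beta}$, i.e.\ $\epsilon_n\asymp(\log n)^{\nu/\beta-1}$. Since $\nu<\beta$ this still tends to $0$, so $k=(\log n/C_r)^{1/\beta}(1+o(1))$ and the bias exponent are unchanged, while $n^{-\epsilon_n}$ now lies genuinely below the bias.

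For graphs with $\nu\le1$, take $\epsilon_n=2a(\log n)^{\nu-1}(\log\log n)^{-\nu}$, which is of order $1/\log\log n$ at $\nu=1$. This again keeps $k=(\log n/\log\log n)(1+o(1))$. Equivalently, solve the full balance equation $ak^\nu+\log q_{\mathsf X}(k)=\log n$, as the paper does, instead of fixing $q(k)=n^{1-o(1)}$ in advance.
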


The theorem is uniform in a strong sense: one event controls the transferred estimator at every larger deployment size, without a union bound over \(D\). Representation stability removes object size from the rate, but it does not make all invariant interactions equally easy to learn. For polynomial source classes, nonnegligible structure persists across the rapidly expanding catalogue of stable interactions, forcing logarithmic convergence. Exponential decay suppresses those high-order interactions fast enough to recover a power law, with different thresholds for sets and graphs because their invariant coordinates grow at different rates.

\subsection{Why classical in-domain power laws change}

In classical fixed-domain nonparametric learning, the number of features up to resolution \(k\) often grows polynomially, say \(q(k)\asymp k^s\). Polynomial source decay \(B(k)\asymp k^{-\tau}\) then produces the familiar power law \(n^{-\tau/(\tau+s)}\). Size generalization on sets and graphs is different. Uniformity over every larger size exposes all invariant interactions, whose stable counts grow as
\[
\exp(C_r k^{r/(r+1)})
\quad\text{and}\quad
\exp(k\log k),
\]
respectively. Consequently, the same polynomial source decay yields only logarithmic convergence in \(n\). A power law is recovered only when the interaction tail has sufficiently fast exponential source decay. This is a statistical effect: it remains even when transfer is exact and optimization is ideal.

To determine whether these rates reflect a particular estimator or an inherent limit of the problem, let \(\mathfrak F_{\mathsf X}(B)\) contain all size-consistent targets of type \(\mathsf X\in\{\mathrm{set},\mathrm{graph}\}\) with source envelope \(B\). We compare against every measurable procedure, including procedures that do not use polynomial truncation and are not themselves invariant. Its worst-case risk is
\[
\mathfrak R_n(\mathsf X,B)
\coloneqq
\inf_{\hat{\boldsymbol f}}\sup_{d\ge1}
\sup_{\boldsymbol f^\star\in\mathfrak F_{\mathsf X}(B)}
\mathbb E\!\left[\sup_{D\ge d}\|\hat f_D-f_D^\star\|_D^2\right].
\]
For the lower bound, the source class must genuinely contain the invariant directions counted by representation stability. Concretely, at every order \(k\), we require a stable-size regression or density submodel with \(q_{\mathsf X}(k)\) orthogonal local perturbations whose neighboring data distributions have quadratic KL divergence. Gaussian-noise regression and bounded local density or exponential-family perturbations satisfy this condition; Appendix~\ref{app:proof-minimax-size-generalization} states and verifies the precise construction.

\begin{theorem}[Minimax optimality]
\label{thm:minimax-size-generalization}
Under the preceding local richness condition, every rate in Theorem~\ref{thm:size-generalization-oracle} is minimax optimal up to constants and the displayed lower-order factors. In particular, for polynomial source decay,
\[
\mathfrak R_n(\mathrm{set},k^{-\tau})
\asymp(\log n)^{-\tau(r+1)/r},
\qquad
\mathfrak R_n(\mathrm{graph},k^{-\tau})
\asymp\left(\frac{\log n}{\log\log n}\right)^{-\tau}.
\]
For exponential source decay, the minimax risks have the same supercritical, critical, and subcritical regimes stated in Theorem~\ref{thm:size-generalization-oracle}. These lower bounds hold already at a suitable stable training size and therefore continue to hold when the loss takes a supremum over all \(D\ge d\).
\end{theorem}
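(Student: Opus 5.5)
We reduce the problem to a one-parameter family of finite-dimensional testing problems, indexed by the interaction order \(k\), and then optimize over \(k\). The upper bounds come from Theorem~\ref{thm:size-generalization-oracle}. So it suffices to exhibit, for each \(k\), a lower bound of the form
\[
\mathfrak R_n(\mathsf X,B)\gtrsim \min\Bigl\{B(k),\;\frac{q_{\mathsf X}(k)}{n}\Bigr\}.
\]
Here \(B(k)\) is the admissible perturbation energy and \(q_{\mathsf X}(k)/n\) is the parametric cost of \(q_{\mathsf X}(k)\) coordinates. We then maximize the right-hand side over \(k\); equivalently, we pick \(k\) with \(q_{\mathsf X}(k)\approx nB(k)\).

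Since \(\sup_{D\ge d}\|\hat f_D-f_D^\star\|_D^2\ge\|\hat f_d-f_d^\star\|_d^2\), it suffices to lower bound the risk at a single training size \(d\). We fix \(k\) and choose \(d\) at or beyond the stable range, so that \(\dim\mathcal V_{d,k}=q_{\mathsf X}(k)\). By the local richness condition we obtain an orthonormal family \(\phi_1,\dots,\phi_{q}\) of invariant degree-\(\le k\) directions and a base target \(f_0\). For a hypercube \(\omega\in\{0,1\}^q\), set
\[
f_\omega=f_0+\gamma\sum_{j}\omega_j\phi_j,\qquad \gamma^2=c\,\min\{B(k)/q,\;1/n\}.
\]
These targets are extended to all sizes by \(T_{d\to D}\), using coherence so that \(\boldsymbol f_\omega\) is size-consistent. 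The choice \(q\gamma^2\le B(k')\) for the relevant orders is made so that \(f_\omega\in\mathfrak F_{\mathsf X}(B)\). This membership is the delicate point: \(B\) is an envelope over all orders \(k'\). Since the perturbation lives in degree \(\le k\), the residual at order \(k'\ge k\) is \(0\). For \(k'<k\) the residual is at most \(q\gamma^2\le B(k)\le B(k')\), by monotonicity of the envelope; for density estimation, admissibility (nonnegativity) is also needed. Neighbouring \(\omega,\omega'\) differing in one coordinate have product-data KL divergence \(\lesssim n\gamma^2\le c\) by the quadratic-KL hypothesis. Assouad's lemma then gives
\[
\mathbb E\|\hat f_d-f_\omega\|_d^2\gtrsim q\gamma^2=\min\{B(k),q/n\}.
\]
This bound holds for arbitrary measurable, possibly non-invariant, estimators: projecting onto the invariant span only reduces error, and in any case Assouad needs no invariance.

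The remaining work is the same calculus as in the upper bound, using the exact feature asymptotics. For sets, \(\log q=C_rk^\beta(1+o(1))\). With polynomial decay, choose \(k\) such that \(C_rk^\beta=(1-\epsilon)\log n\); then \(q/n\) is negligible, and \(B(k)\asymp k^{-\tau}\asymp(\log n)^{-\tau/\beta}=(\log n)^{-\tau(r+1)/r}\). For graphs, solving \(k\log k\approx\log n\) gives \(k\asymp\log n/\log\log n\). For exponential decay we balance \(e^{-ak^\nu}\) against \(e^{C_rk^\beta}/n\). When \(\nu>\beta\), taking \(k\) to be a constant gives \(n^{-1}\); here, to obtain \(n^{-1+o(1)}\), we take \(k\) to be a large constant so that \(q\) is large. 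When \(\nu=\beta\), setting \(ak^\beta=\log n-C_rk^\beta\) gives \(n^{-a/(a+C_r)}\). When \(\nu<\beta\), \(k^\beta\approx\log n/C_r\) gives \(\exp[-aC_r^{-\nu/\beta}(\log n)^{\nu/\beta}]\). The graph cases are analogous.

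The main obstacle is the membership and richness step. We must show that the hypercube stays within the source class at every order simultaneously and, for densities, stays positive with quadratic KL, uniformly in \(d\). We also need the sharp constants \(C_r\) in \(\log q\) to transfer into the lower bound with only \(1+o(1)\) loss. The first thing we would try here is to use low-degree Hermite-type invariants with bounded sup norms on a truncated region. Failing that, we would fall back to Gaussian-noise regression, where KL is exactly \(n\gamma^2/2\).
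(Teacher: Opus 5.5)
Your argument is the paper's. You bound the supremum over \(D\ge d\) below by its value at a stable training size \(d_k\), run Assouad on the hypercube of \eqref{eq:local-hypercube} with squared amplitude \(\asymp\min\{B(k)/q_{\mathsf X}(k),1/n\}\) and the quadratic KL bound of \eqref{eq:local-kl}, and conclude \(\mathfrak R_n(\mathsf X,B)\gtrsim\sup_k\min\{B(k),q_{\mathsf X}(k)/n\}\) as in \eqref{eq:general-minimax-lower}. The paper does not prove what you flag as the main obstacle: membership of every vertex in \(\mathfrak F_{\mathsf X}(B)\) at all orders, nonnegativity of densities, and retaining enough directions. It builds these into the nondegeneracy hypothesis, which postulates \(m_k\ge c_0q_{\mathsf X}(k)\) admissible directions whose hypercube lies in the class whenever \(m_ku^2\le c_1B(k)\). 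Since the theorem is stated under that local richness condition, you may simply invoke it. The factor \(c_0\) shifts \(\log q_{\mathsf X}(k)\) only by \(O(1)\), so the sharp constant \(C_r\) survives automatically.

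The step that fails as written is the polynomial-decay evaluation, which places \(k\) on the wrong side of the crossing. With \(C_rk^\beta=(1-\varepsilon)\log n\) you get \(q_{\mathrm{set}}(k)/n=n^{-\varepsilon+o(1)}\), so \(\min\{B(k),q_{\mathrm{set}}(k)/n\}=q_{\mathrm{set}}(k)/n\). Assouad then gives only the lower bound \(n^{-\varepsilon+o(1)}\), far below \((\log n)^{-\tau(r+1)/r}\).

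Making \(q/n\) negligible is the right move for the upper bound, which minimizes \(B(k)+q(k)/n\). The lower bound instead maximizes a minimum, so you need \(q(k)/n\gtrsim B(k)\), i.e.\ \(k\) at or beyond the crossing. Taking \(C_rk^\beta=(1+\varepsilon)\log n\) makes the constraint \(B(k)/q\) the active one, so the perturbation energy is \(\asymp B(k)\). The minimum then equals \(B(k)\asymp\bigl((1+\varepsilon)\log n/C_r\bigr)^{-\tau/\beta}\asymp(\log n)^{-\tau(r+1)/r}\), as required.

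The same orientation is needed elsewhere:
\begin{itemize}
\item For graphs, take \(k\log k\) slightly larger than \(\log n\).
\item In the subcritical exponential regime, take \(C_rk^\beta=(1+\varepsilon)\log n\) with \(\varepsilon\to0\) slowly; this recovers the factor \(1+o(1)\) in the exponent.
\item In the supercritical regime, a fixed \(k\) already gives \(\gtrsim1/n\), which matches \(n^{-1+o(1)}\). Enlarging the fixed \(k\) changes only constants. To capture the \(n^{o(1)}\) factor, take \(k\) at the crossing \(ak^\nu\approx\log n\).
\end{itemize}
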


The theorem rules out improvements by a different architecture, regularizer, optimization method, or transfer rule: the infimum in \(\mathfrak R_n\) ranges over all estimators. It also explains the role of exact transfer. Exact transfer removes cross-size mismatch, but it cannot create information about the many statistically distinguishable invariant directions that remain uncertain from \(n\) observations. The logarithmic regime is therefore not an artifact of polynomial regression or of the proof technique; it is the optimal statistical behavior of the entire source class.

\begin{corollary}[Exponential separation in sample complexity]
\label{cor:sample-complexity-separation}
Fix the confidence level and suppose \(B(k)\asymp k^{-\tau}\). Up to constants and lower-order factors, achieving squared error at most \(\varepsilon\) at a fixed size requires
\[
n_{\mathrm{in}}(\varepsilon)
=
\begin{cases}
\varepsilon^{-(\tau+rd)/\tau}, & \text{sets},\\
\varepsilon^{-(\tau+rd^2)/\tau}, & \text{graphs},
\end{cases}
\]
whereas uniform size generalization requires
\[
n_{\mathrm{size}}(\varepsilon)
=
\begin{cases}
\displaystyle
\exp\!\left(\Theta\!\left(\varepsilon^{-\frac{r}{\tau(r+1)}}\right)\right),
& \text{sets},\\[5pt]
\displaystyle
\exp\!\left(\Theta\!\left(\varepsilon^{-1/\tau}\log(1/\varepsilon)\right)\right),
& \text{graphs}.
\end{cases}
\]
Thus the same polynomial source class has polynomial sample complexity in-domain but exponential sample complexity when one estimator must transfer uniformly to all larger sizes.
\end{corollary}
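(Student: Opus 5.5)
The plan is to read the corollary off the two rate characterizations already established, by inverting each rate in \(n\). Both directions (achievability and necessity) come with them. For the in-domain column, use the fixed-size rates of Table~\ref{tab:main-minimax-rates}: at size \(d\) the invariant feature count up to degree \(k\) is \(q(k)\asymp k^{s}\) with \(s=rd\) (sets) or \(s=rd^2\) (graphs). The same bias--variance argument behind Theorem~\ref{thm:size-generalization-oracle} then gives the risk bound \(k^{-\tau}+q(k)/n\). Balancing at \(k^{\tau+s}\asymp n\) yields \(n^{-\tau/(\tau+s)}\), and the matching lower bound is the classical Fano construction on \(q(k)\) orthogonal perturbations, which is the fixed-size specialization of the argument behind Theorem~\ref{thm:minimax-size-generalization}. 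Setting \(n^{-\tau/(\tau+s)}=\varepsilon\) gives \(n_{\mathrm{in}}(\varepsilon)=\varepsilon^{-(\tau+s)/\tau}\). Since the rate is a two-sided bound, this threshold is both sufficient and necessary up to constants.

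For the size-generalization column, invoke Theorem~\ref{thm:size-generalization-oracle}(i) for sufficiency and Theorem~\ref{thm:minimax-size-generalization} for necessity. Together they give \(\mathfrak R_n\asymp(\log n)^{-\tau(r+1)/r}\) for sets and \((\log n/\log\log n)^{-\tau}\) for graphs.

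For sets, solving \((\log n)^{-\tau(r+1)/r}\asymp\varepsilon\) gives \(\log n\asymp\varepsilon^{-r/(\tau(r+1))}\). The multiplicative constants in the rate only rescale \(\varepsilon\) by constants, and the exponent \(\Theta(\cdot)\) absorbs this. Hence \(n_{\mathrm{size}}=\exp(\Theta(\varepsilon^{-r/(\tau(r+1))}))\).

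For graphs, set \(L=\log n\) and solve \(L/\log L\asymp\varepsilon^{-1/\tau}=:M\). The inverse of \(x\mapsto x/\log x\) satisfies \(L\asymp M\log M\), since \(M\log M/\log(M\log M)=M(1+o(1))\). This gives \(\log n\asymp\varepsilon^{-1/\tau}\log(1/\varepsilon)\), up to the factor \(1/\tau\), which is absorbed into \(\Theta\).

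The point needing care is turning an expected-risk minimax statement into a sample-complexity statement at a fixed confidence level. For the upper bound, Theorem~\ref{thm:size-generalization-oracle} is already stated at fixed confidence, so it applies directly. For the lower bound, the Fano argument underlying Theorem~\ref{thm:minimax-size-generalization} yields a constant-probability lower bound: with probability bounded below, the error exceeds a constant times the rate. This implies that no procedure attains error \(\le\varepsilon\) with confidence \(1-\delta\) (for \(\delta\) below that constant) unless \(n\) exceeds the inverted rate. I would state this by citing the probabilistic form of the Fano construction rather than the expectation form.

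Finally, the exponential separation follows by comparison: \(n_{\mathrm{in}}\) is polynomial in \(1/\varepsilon\) for each fixed \(d\), while \(n_{\mathrm{size}}\) is exponential in a power of \(1/\varepsilon\). The main obstacle is only bookkeeping, namely keeping track of which lower-order factors (the \(\log\log\) term for graphs) survive inversion.
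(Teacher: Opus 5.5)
Your proposal is correct and matches the paper's argument. The paper states the corollary without a separate proof; it follows by inverting the two-sided rates of Theorem~\ref{thm:size-generalization-oracle}, Theorem~\ref{thm:minimax-size-generalization}, and the in-domain column of Table~\ref{tab:main-minimax-rates}, exactly as you do, including the step \(L/\log L\asymp M\Rightarrow L\asymp M\log M\) for graphs. Your conversion of the expected-risk lower bound into a fixed-confidence one is a detail the paper leaves implicit, and you do not need a probabilistic Fano argument for it (which would require KL control between all pairs of vertices): the paper's Assouad construction with only the neighbor KL bound in~\eqref{eq:local-kl} already suffices, since the Hamming loss on the hypercube is bounded by \(m_k\).
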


\subsection{Discussion}

Our results are stated at the level of function classes rather than a particular neural parameterization. Consequently, they are compatible with permutation-equivariant graph architectures, including graph Transformers, when a permutation-invariant readout produces a graph-level prediction. Such an architecture can parameterize an estimator in our graph setting, but its symmetry alone does not imply the rates above. The cross-size target compatibility, exact norm-controlled transfer, stable feature growth, and source-decay conditions must still be verified. We therefore do not claim that every graph Transformer size-generalizes at these rates; rather, the theorems characterize the statistical behavior of those invariant graph-learning problems that satisfy the stated conditions.

This distinction separates three properties that are often conflated. First, an architecture may be \emph{applicable} to objects of different sizes. Second, a fixed predictor may be \emph{transferable}, meaning that its outputs remain stable as size changes. Third, a predictor learned from finite data may achieve \emph{statistical size generalization}, meaning that its error at unseen sizes decreases at a controlled rate as the number of training samples grows. The first two properties make cross-size prediction possible, but neither determines its sample complexity. Our results address the third property.

The minimax bounds also clarify what would be required to improve the rates. In the polynomial source class, no alternative architecture, regularizer, optimizer, or transfer rule can uniformly replace the logarithmic rate by a power law. Faster rates are possible only by imposing additional structure that reduces the effective class, such as faster source decay or a smaller family of transferable interactions. Thus the exponential separation is not a statement that all individual tasks are hard; it states that polynomial source decay alone is too weak to guarantee efficient size generalization uniformly over the class.

Finally, exact transfer isolates the statistical difficulty studied here by removing cross-size approximation mismatch. With approximate transfer, an additional error term would remain even with unlimited data. Likewise, problems with different symmetries require their own stable feature counts and need not exhibit the same thresholds. These are natural directions for extending the present theory, but they are distinct from the finite-sample phenomenon characterized by our results.

Altogether, the main message of this paper is:
\begin{tcolorbox}
Exact transfer enables size generalization, but target regularity determines its statistical efficiency. Fast exponential source decay yields nearly parametric rates, whereas polynomial decay yields minimax-optimal logarithmic rates.
\end{tcolorbox}

\section{Conclusion}
\label{sec:conclusion}

We developed a statistical theory of size generalization for supervised regression and unsupervised density estimation on sets and graphs. By combining exact transfer, representation stability, and source decay in invariant polynomial coordinates, we obtained rates that are uniform over both the training and deployment sizes. Polynomial source decay leads to minimax-optimal logarithmic rates, producing an exponential separation from classical in-domain power laws. Sufficiently fast exponential source decay overcomes invariant feature growth and recovers nearly parametric size generalization, with sharp thresholds that differ between sets and graphs.
These results show that architectural transferability alone does not determine the statistical efficiency of size generalization. The decisive interaction is between the growth of transferable invariant representations and the decay of high-order structure in the target. The matching minimax bounds establish that this phase transition is intrinsic rather than an artifact of the estimator or analysis.

\subsection*{AI use statement}

Large language model tools (ChatGPT 5.6 Sol and Claude Fable 5.1) were used primarily for copyediting, including improvements to grammar, wording, clarity, and \LaTeX{} presentation. The
research direction, core ideas, theoretical results, and experimental design
are original contributions of the authors. The authors carefully reviewed and
verified all technical content and take full responsibility for the final
manuscript.

\bibliography{ref}
\bibliographystyle{iclr2027_conference}

\clearpage
\appendix
\setcounter{tocdepth}{2}
\addtocontents{toc}{\protect\setcounter{tocdepth}{2}}
\renewcommand{\contentsname}{Appendix Contents}
\tableofcontents
\clearpage

\section{Additional Related Work}

\paragraph{Learning under invariances.}
Symmetry and invariance are central principles in geometric machine learning. Deep Sets provide a classical architecture for sets \citep{zaheer2017deep}, while graph neural networks and higher-order invariant and equivariant architectures are widely used for graphs \citep{xu2018powerful,maron2018invariant,satorras2021n}. Probabilistic symmetry provides a distributional foundation for invariant and equivariant models \citep{bloem2020probabilistic}, while complementary work studies limitations of set representations \citep{wagstaff2019limitations} and the ability of graph networks to count substructures \citep{tahmasebi2023power}. Beyond architectures that hard-code invariance, architecture-agnostic approaches include canonicalization \citep{ma2024canonicalization,kaba2023equivariance,dym2024equivariant,shumaylovlie,tahmasebi2025regularity} and frame averaging \citep{punyframe,atzmon2022frame,lin2024equivariance}. Although symmetry must be discovered in some settings, either adaptively \citep{tahmasebi2026adaptive} or via statistical tests \citep{soleymani2025robust,soleymani2026unified}, here we focus on the case of known symmetries. Applications of learning under invariances span reinforcement learning \citep{wang2022mathrm}, physics \citep{bogatskiy2020lorentz,nguyen2024theory}, drug discovery \citep{bongini2021molecular}, and social networks \citep{li2023survey}. Our work is complementary to these modeling approaches: rather than proposing a new architecture, we study when invariant statistical estimation can transfer across dimensions.

\paragraph{Methods and limitations for size generalization.}
Beyond the transferability theory discussed in the main text, several works develop model-specific approaches to graph size generalization through Wasserstein barycenter matching, size-shift regularization, or disentangled representations \citep{chu2023wasserstein,buffelli2022sizeshiftreg,huang2024enhancing}. Graph metrics have also been used to relate structural shifts to GNN stability \citep{chuang2022tree}. Other results identify architectural limitations to sharp size generalization \citep{velivckovic2024softmax} or derive graph-size-dependent statistical guarantees from a manifold perspective \citep{wang2024manifold}. These works clarify mechanisms that help or hinder generalization across sizes. Our focus is instead on size-uniform sample complexity and matching minimax rates for invariant estimation.

\paragraph{Generalization and sample complexity under symmetry.}
The statistical benefits of symmetry have been widely studied in learning theory. Prior work has shown that invariance can reduce sample complexity in supervised learning and related estimation problems \citep{tahmasebi2023exact,elesedy2021provably,wang2020incorporating,petrache2023approximation,tahmasebigeneralization}, with extensions to estimating probability divergences \citep{chen2023sample,tahmasebi2023sample}. More broadly, the study of symmetry and scaling laws extends to comparisons between approximate and exact symmetry \citep{tahmasebi2026achieving}, data augmentation via Fourier analysis on groups \citep{tahmasebi2026data,tahmasebi2026sparse}, and statistical-computational trade-offs \citep{soleymanilearning,soleymani2026efficient}. Generalization bounds for graph neural networks have also been developed using Rademacher complexity \citep{garg2020generalization}, VC dimension \citep{scarselli2018vapnik,morris2023wl}, and PAC-Bayesian analysis \citep{liao2020pac,ju2023generalization}. Further work studies GNN generalization and transfer \citep{maskey2022generalization,maskey2025generalization,levie2023graphon,vasileiou2024covered,rauchwerger2024generalization}; see also the survey of \citet{vasileiou2025survey}. Our contribution differs by focusing on invariant estimation across changing dimensions, with explicit dimension-uniform rates and phase transitions driven by stable feature complexity.

\paragraph{Source conditions and optimal rates.}
Classical analyses of regularized least squares derive optimal statistical rates under source and capacity conditions \citep{caponnetto2007optimal}. More recent work establishes exact and minimax rates for kernel ridge regression under source conditions in large dimensions \citep{zhang2025optimal}. These results concern fixed domains or joint asymptotics of sample size and ambient dimension. We instead study transfer from one object size to all larger sizes and determine when the resulting rates can be uniform over both the training and deployment sizes.


\section{Function-Level Symmetry, Statistical Learning, and Transfer}
\label{app:preliminaries-invariant-polynomials}

This appendix formalizes the function-level framework of Section~\ref{sec:problem-statement}. It explains why unlabeled set and graph laws yield invariant density and regression targets, records representative losses under the unified observation notation, and defines exact transfer through coherent linear isometries. It then develops the stable invariant function spaces used in the rate calculations.

\subsection{Group actions on data and functions}
\label{app:group-actions}

A group \(G\) acts on a measurable space \(\mathcal X\) if
\[
e\cdot x=x,
\qquad
g\cdot(h\cdot x)=(gh)\cdot x,
\]
for every \(g,h\in G\) and \(x\in\mathcal X\). This action induces a linear action on real-valued functions \(f:\mathcal X\to\mathbb R\) by pullback:
\[
(g\cdot f)(x)\coloneqq f(g^{-1}\cdot x).
\]
A function is \(G\)-invariant when \(g\cdot f=f\) for every \(g\in G\), equivalently,
\[
f(g\cdot x)=f(x)
\qquad
\text{for every }g\in G,\ x\in\mathcal X.
\]
For sets and graphs of size \(d\), the group is \(G=S_d\), acting by relabeling elements or nodes as specified in Section~\ref{sec:problem-statement}.

A probability distribution \(\mu\) on \(\mathcal{X}\) is \(G\)-invariant if
\[
g\cdot X\overset{\mathrm{law}}{=}X
\qquad
\text{for every }g\in G .
\]
Equivalently, \(\mu(g^{-1}A)=\mu(A)\) for all measurable sets \(A\subseteq\mathcal{X}\).

\paragraph{Density targets.}
Suppose \(\mu\) admits a density \(f^\star\) with respect to a \(G\)-invariant reference measure \(\lambda\). For every measurable \(A\),
\[
\int_A f^\star\,d\lambda
=\mu(A)
=\mu(g^{-1}A)
=\int_A f^\star(g^{-1}\cdot x)\,d\lambda(x).
\]
Hence \(f^\star(g\cdot x)=f^\star(x)\) for \(\lambda\)-almost every \(x\). Thus an unlabeled set or graph distribution has an invariant density whenever its density is expressed relative to an invariant reference measure.

\paragraph{Regression targets.}
Let \(\rho\) be a joint distribution on \(\mathcal X\times\mathbb R\) satisfying
\[
(g\cdot X,Y)\overset{\mathrm{law}}{=}(X,Y).
\]
Then the conditional law of \(Y\) given \(X=x\) agrees with that given \(X=g\cdot x\), up to the usual almost-sure choice of a conditional version. Consequently, the regression function
\[
f^\star(x)=\mathbb E[Y\mid X=x]
\]
admits a \(G\)-invariant version.

\paragraph{Invariant risk minimization.}
Let \(\mathcal F\) be a function class closed under the induced \(G\)-action. A population risk \(L:\mathcal F\to\mathbb R\) is \(G\)-invariant if \(L(g\cdot f)=L(f)\). If the minimizer is unique, it is necessarily invariant. If the minimizer is not unique, \(\mathcal F\) is convex, and \(L\) is convex, then group averaging produces an invariant minimizer:
\[
\overline f(x)
\coloneqq
\int_G (g\cdot f)(x)\,d\eta_G(g),
\]
where \(\eta_G\) is the uniform measure for finite groups and Haar probability measure for compact groups. Indeed, Jensen's inequality and invariance give
\[
L(\overline f)
\le
\int_G L(g\cdot f)\,d\eta_G(g)
=L(f).
\]
This argument takes place entirely in function space and requires no action on a parameterization. Accordingly, uniqueness means uniqueness as an \(L^2\) function, up to almost-everywhere equality; it is compatible with many parameter values representing that same function.

\subsection{Examples of statistical learning objectives}
\label{app:m-estimation-examples}

Recall the unified observation \(Z_i=X_i\) for unsupervised learning and \(Z_i=(X_i,Y_i)\) for supervised learning. In either case, \(\widehat L_{n,d}(f)=n^{-1}\sum_{i=1}^n\ell_d(Z_i,f)\) estimates \(L_d(f)=\mathbb E[\ell_d(Z,f)]\). The formulation includes the following standard tasks.

\paragraph{Unsupervised statistical learning: density estimation.}
Let \(\mathcal F_d\subseteq L^2(\lambda_d)\) be a class of nonnegative functions satisfying \(\int f\,\mathrm d\lambda_d=1\). Since \(\mu_d=f_d^\star\lambda_d\), the negative log-likelihood loss
\[
\ell_d(x,f)=-\log f(x)
\]
has population risk
\[
L_d(f)=\mathbb E_{X\sim\mu_d}[-\log f(X)].
\]
This risk is uniquely minimized by \(f_d^\star\), up to \(\lambda_d\)-almost-everywhere equality, whenever \(f_d^\star\in\mathcal F_d\) and the risk is finite. Restricting to \(\mathcal F_d^{\mathrm{inv}}\) gives invariant maximum likelihood directly at the function level.

\paragraph{Supervised statistical learning: regression.}
For real-valued labels, squared loss gives
\[
\ell_d(x,y,f)=(y-f(x))^2,
\qquad
f_d^\star(x)=\mathbb E[Y\mid X=x].
\]
More generally, any proper supervised loss may be used when its population minimizer is the desired conditional target. If relabeling \(X\) does not change the joint law of \((X,Y)\), the target is invariant.

\paragraph{Score matching and denoising.}
The scalar formulation is not tied to likelihood: proper scoring rules, density-ratio objectives, and variational losses also apply whenever their population minimizer identifies \(f_d^\star\). Score matching and denoising instead estimate a vector-valued field, such as \(\nabla\log f_d^\star\), in \(L^2(\lambda_d;\mathbb R^{m_d})\), where \(m_d=rd\) for sets and \(m_d=rd^2\) for graphs. Invariance of the density makes its score permutation equivariant. Replacing invariant scalar spaces by the corresponding equivariant vector spaces leaves the transfer, truncation, and estimation arguments unchanged, provided the same representation-stability and source-decay conditions hold.

\subsection{Exact functional transfer across sizes}
\label{app:any-dimensional-models}

For density estimation let \(\mathcal H_d=L^2(\lambda_d)\), and for regression let \(\mathcal H_d=L^2(\mu_d)\). In either case, let \(\mathcal H_d^{\mathrm{inv}}\) be its invariant subspace, with inner product \(\langle\cdot,\cdot\rangle_d\) and norm \(\|\cdot\|_d\). An exact transfer system is a collection of linear maps
\[
T_{d\to D}:\mathcal H_d^{\mathrm{inv}}\longrightarrow
\mathcal H_D^{\mathrm{inv}},
\qquad d\le D,
\]
satisfying the following three properties:
\begin{enumerate}[label=\textup{(\roman*)},leftmargin=2.2em]
    \item \emph{coherence:}
    \(T_{d\to d}=\mathrm{Id}\) and
    \(T_{D\to E}T_{d\to D}=T_{d\to E}\);
    \item \emph{isometry:}
    \(\langle T_{d\to D}f,T_{d\to D}h\rangle_D
    =\langle f,h\rangle_d\);
    \item \emph{target compatibility:}
    \(T_{d\to D}f_d^\star=f_D^\star\).
\end{enumerate}
The isometry identity implies
\[
\|T_{d\to D}f-T_{d\to D}h\|_D
=\|f-h\|_d.
\]
The map need not reach every function at the larger size; it only identifies the size-stable subspace carrying the shared rule.

These assumptions express the structure needed to learn at one size and make a statistical claim at another. Without target compatibility, the unseen target \(f_D^\star\) is not identifiable from size-\(d\) observations. Without norm control, transfer could arbitrarily amplify a small source-size error. Isometry rules out that artifact and makes errors comparable across sizes. For score matching or denoising, the same definition applies to the corresponding equivariant vector-valued spaces.

\paragraph{Stable invariant features.}
Suppose \(\{\psi_{d,j}\}_{j\ge1}\) and \(\{\psi_{D,j}\}_{j\ge1}\) are corresponding orthonormal invariant features. Define
\[
T_{d\to D}\psi_{d,j}=\psi_{D,j}
\]
and extend linearly and continuously. Then, for every square-summable sequence \((a_j)_j\),
\[
T_{d\to D}\!\left(\sum_j a_j\psi_{d,j}\right)
=\sum_j a_j\psi_{D,j},
\qquad
\left\|T_{d\to D}\sum_j a_j\psi_{d,j}\right\|_D^2
=\sum_j a_j^2.
\]
Thus transfer preserves the learned rule and its error exactly. This construction underlies the stable polynomial and Hermite classes below. It is also the function-space abstraction of weight sharing in Deep Sets and GNNs: the computation is instantiated on a new number of elements or nodes, but the rule itself is unchanged \citep{zaheer2017deep,xu2018powerful,levin2024any}.

\paragraph{Common ambient objects.}
Exact transfer also arises when all finite-size functions are realizations of a single rule on a common limit object. Mean embeddings and normalized empirical measures provide such objects for sets. Graphons and normalized motif functionals provide them for dense graphs \citep{ruiz2020graphon,diaconis2008graphlimits}. Appendix~\ref{app:sets-graphs-examples} gives explicit set and graph constructions, including replication, graph blow-ups, and shared Deep Sets and GNN rules.

\subsection{Invariant polynomial function spaces}
\label{app:invariant-polynomial-parameters}

A central example is a finite-dimensional space of polynomial functions. Let \(V_d\) be a finite-dimensional vector space on which \(G_d\) acts linearly, and let \(\mathcal P_{d,k}\) be the space of homogeneous degree-\(k\) polynomial functions on \(V_d\). The induced action on functions is
\[
(g\cdot f)(x)\coloneqq f(g^{-1}\cdot x).
\]
The invariant homogeneous polynomial space is
\[
\mathcal{P}_{d,k}^{G_d}
\coloneqq
\{f\in\mathcal{P}_{d,k}:g\cdot f=f \text{ for all }g\in G_d\}.
\]
For density estimation, completeness follows from the Hermite basis of \(L^2(\lambda_d)\). For regression, we assume the standard polynomial-completeness condition in \(L^2(\mu_d)\), which holds, for example, under suitable moment determinacy or exponential-moment conditions. Averaging polynomial approximants over \(S_d\) then proves density of invariant polynomials in \(\mathcal H_d^{\mathrm{inv}}\).

For finite groups, the dimension of the invariant subspace can be expressed using characters. If \(\varrho_{d,k}\) is the representation of \(G_d\) on \(\mathcal{P}_{d,k}\) (i.e., the collection of matrices corresponding to the action) and \(\chi_{d,k}(g)=\mathrm{tr}(\varrho_{d,k}(g))\) is its character, then
\[
\dim \mathcal{P}_{d,k}^{G_d}
=
\frac{1}{|G_d|}\sum_{g\in G_d}\chi_{d,k}(g).
\]
Equivalently, if \(P_{G_d}\) denotes the group-averaging projection
\[
P_{G_d}f
=
\frac{1}{|G_d|}\sum_{g\in G_d}g\cdot f,
\]
then
\[
\dim \mathcal{P}_{d,k}^{G_d}
=
\mathrm{tr}(P_{G_d}).
\]
For compact groups, the finite average is replaced by Haar integration.

Representation stability is the phenomenon that these invariant function-space dimensions stabilize as \(d\) grows. This stabilization is the algebraic mechanism behind dimension-uniform learning in our setting.

\subsection{Set-valued data and multisymmetric polynomials}
\label{app:sets-multisymmetric-polynomials}

Consider set-valued data with \(d\) elements in \(\mathbb{R}^r\). We represent the set as
\[
X=(x_1,\ldots,x_d)\in(\mathbb{R}^r)^d,
\qquad
x_i=(x_{i,1},\ldots,x_{i,r})\in\mathbb{R}^r .
\]
The symmetric group \(S_d\) acts by permuting the \(d\) elements:
\[
\pi\cdot X=(x_{\pi^{-1}(1)},\ldots,x_{\pi^{-1}(d)}).
\]
The corresponding invariant polynomials are called multisymmetric polynomials: they are polynomials in the \(dr\) variables \(x_{i,a}\) that are invariant under permutations of the block index \(i\).

Let \(\mathcal{P}_{d,r,k}^{S_d}\) denote the space of homogeneous degree \(k\) multisymmetric polynomials. A monomial is described by a collection of multi-indices
\[
\alpha_i=(\alpha_{i,1},\ldots,\alpha_{i,r})\in\mathbb{N}^r,
\qquad
1\le i\le d,
\]
with total degree \(\sum_i|\alpha_i|=k\). The group \(S_d\) permutes the multi-indices \(\alpha_i\). Therefore an invariant monomial orbit is determined by a multiset of nonzero multi-indices
\[
\{\alpha^{(1)},\ldots,\alpha^{(q)}\},
\qquad
\alpha^{(j)}\in\mathbb{N}^r\setminus\{0\},
\qquad
\sum_{j=1}^q|\alpha^{(j)}|=k.
\]
For \(d\ge k\), every such multiset can be realized using at most \(k\) nonzero blocks, so the dimension stabilizes in \(d\).

For each \(m\ge1\), the number of nonzero block types of degree \(m\) is
\[
a_r(m)=\#\{\alpha\in\mathbb{N}^r:|\alpha|=m\}
=
\binom{r+m-1}{m}.
\]
Hence the stable dimension of homogeneous degree \(k\) invariants is
\[
N_r(k)
\coloneqq
\dim \mathcal{P}_{d,r,k}^{S_d}
=
[t^k]\prod_{m\ge1}(1-t^m)^{-a_r(m)}
\qquad
\text{for } d\ge k,
\]
where \([t^k]F(t)\) denotes the coefficient of \(t^k\) in the formal power series \(F(t)\). Equivalently,
\[
N_r(k)
=
[t^k]\prod_{m\ge1}(1-t^m)^{-\binom{r+m-1}{m}}.
\]
For \(r=1\), this reduces to the ordinary partition number:
\[
N_1(k)=p(k).
\]
Thus the invariant degree-\(k\) dimension is independent of \(d\) once \(d\ge k\), whereas the full homogeneous degree-\(k\) polynomial space in \(dr\) variables has dimension
\[
\binom{dr+k-1}{k},
\]
which grows polynomially in \(d\) of order \(d^k\) for fixed \(k\).

The asymptotic behavior of \(N_r(k)\) for fixed \(r\) and large \(k\) is subexponential:
\[
\log N_r(k)
\sim
C_r k^{\frac{r}{r+1}},
\]
where
\[
C_r
=
\frac{r+1}{r}
\big(r\zeta(r+1)\big)^{\frac{1}{r+1}},
\]
and \(\zeta(s)\coloneqq \sum_{n=1}^\infty n^{-s}\) denotes the Riemann zeta function. Thus
\[
N_r(k)
=
\exp\left(
C_r k^{\frac{r}{r+1}}
+
o\left(k^{\frac{r}{r+1}}\right)
\right).
\]
For example, \(r=1\) recovers the Hardy--Ramanujan exponent
\[
\log p(k)\sim \pi\sqrt{\frac{2k}{3}}.
\]
For \(r=2\),
\[
\log N_2(k)\sim \frac{3}{2}(2\zeta(3))^{1/3}k^{2/3},
\]
and for \(r=3\),
\[
\log N_3(k)\sim \frac{4}{3}(3\zeta(4))^{1/4}k^{3/4}.
\]

These formulas show that for fixed element dimension \(r\) and polynomial degree \(k\), the invariant polynomial function-space dimension stabilizes as the set size \(d\) grows. This is the representation-stability mechanism underlying the dimension-independent rates for set-valued data.

\subsection{Hermite features for set-valued data}
\label{app:hermite-set-features}

For data on \((\mathbb{R}^r)^d\), a natural basis is given by multivariate Hermite polynomials relative to a Gaussian reference measure. Let \(H_\alpha\) denote an orthonormal Hermite polynomial indexed by \(\alpha\in\mathbb{N}^{dr}\). For block-structured data, we may write
\[
H_{\alpha_1,\ldots,\alpha_d}(X)
=
\prod_{i=1}^d H_{\alpha_i}(x_i),
\qquad
\alpha_i\in\mathbb{N}^r.
\]
The group \(S_d\) permutes the block indices \(\alpha_i\). Therefore invariant Hermite features are obtained by symmetrizing Hermite monomials over \(S_d\). The dimension count for homogeneous degree \(k\) invariant Hermite features is the same as for multisymmetric polynomials:
\[
\dim \mathcal{H}_{d,r,k}^{S_d}=N_r(k)
\qquad
\text{for } d\ge k.
\]

This construction gives a concrete hierarchy for invariant density and regression functions. For density estimation, let \(\phi_{d,j}\) be invariant Hermite functions; for regression, let them be the corresponding invariant polynomials orthonormalized in \(L^2(\mu_d)\). In either case, the scalar target admits an expansion
\[
f_d^\star(X)
=
\sum_j a_{d,j}^\star \phi_{d,j}(X).
\]
Truncation at degree \(k\) produces a finite-dimensional invariant function class whose dimension is
\[
N_r(\le k)=\sum_{q=0}^k N_r(q),
\]
independent of \(d\) once \(d\ge k\). For density estimation, the admissible class additionally enforces nonnegativity and unit integral.

If one instead studies score matching, an invariant log-density expansion \(\log f_d^\star=\sum_j b_{d,j}^\star\phi_{d,j}\) gives the Lebesgue score
\[
\nabla_X \log\frac{\mathrm d\mu_d}{\mathrm d x}(X)
=
-X+\sum_j b_{d,j}^\star \nabla_X\phi_{d,j}(X).
\]
Gradients of invariant scalar features are equivariant vector fields:
\[
\nabla_X \phi_{d,j}(\pi\cdot X)=\pi\cdot\nabla_X\phi_{d,j}(X).
\]
If the function coefficients have a controlled tail, for example,
\[
\sum_{j>m}(a_{d,j}^\star)^2\le b_m^2,
\]
then truncating to \(m\) invariant Hermite functions incurs squared \(L^2\) approximation error at most \(b_m^2\). This is a concrete instance of the approximation tail \(B(k)\) in Theorem~\ref{thm:size-generalization-oracle}.

\subsection{Attributed graphs}
\label{app:attributed-graph-invariants}

An attributed graph with \(d\) nodes is represented, as in the main text, by
\[
X\in\mathcal X_d=\mathbb R^{r\times d\times d}.
\]
The channel index \(a\in\{1,\ldots,r\}\) records edge and/or node features; node features may, for example, be placed on diagonal entries of designated channels. Node relabeling acts simultaneously on both node indices:
\[
(\pi\cdot X)_{a,i,j}
=X_{a,\pi^{-1}(i),\pi^{-1}(j)}.
\]

A homogeneous degree-\(k\) polynomial function is a linear combination of monomials
\[
\prod_{a=1}^r\prod_{i,j=1}^d
X_{a,i,j}^{m_{a,i,j}},
\qquad
\sum_{a,i,j}m_{a,i,j}=k.
\]
Two such monomials belong to the same \(S_d\)-orbit precisely when their node indices differ only by a relabeling. The orbits are therefore indexed by unlabeled channel-attributed directed multigraph patterns of total degree \(k\), with diagonal factors representing vertex decorations when node features are encoded on the diagonal.

Let \(\mathcal R_{d,r,k}^{S_d}\) be the corresponding space of invariant homogeneous polynomial functions. Any degree-\(k\) monomial involves at most \(2k\) distinct node indices, so
\[
\dim\mathcal R_{d,r,k}^{S_d}
=G_r(k)
\qquad
\text{for every }d\ge2k,
\]
where \(G_r(k)\) is the number of admissible unlabeled channel-attributed multigraph patterns of degree \(k\). Thus the function-space dimension stabilizes with respect to the number of nodes.

The same dimension can be written by character theory. With
\[
W_d=\mathbb R^{r\times d\times d},
\]
we have
\[
\mathcal R_{d,r,k}^{S_d}
=\left(\operatorname{Sym}^k(W_d^\ast)\right)^{S_d},
\qquad
\dim\mathcal R_{d,r,k}^{S_d}
=\frac1{d!}\sum_{\pi\in S_d}
\chi_{\operatorname{Sym}^k(W_d^\ast)}(\pi).
\]
Equivalently, if \(\varrho_d\) is the representation of \(S_d\) on \(W_d^\ast\), its Molien series is
\[
\sum_{k\ge0}\dim\mathcal R_{d,r,k}^{S_d}t^k
=
\frac1{d!}\sum_{\pi\in S_d}
\frac1{\det(I-t\varrho_d(\pi))}.
\]

The asymptotic dependence on \(k\) separates node-only and edge-attributed systems. If all active channels are supported on the diagonal, the stable dimension reduces to the multisymmetric count
\[
N_r(k)
=
[t^k]\prod_{m\ge1}(1-t^m)^{-\binom{r+m-1}{m}},
\]
and therefore
\[
N_r(k)
=
\exp\left(
C_r k^{\frac{r}{r+1}}
+
o\left(k^{\frac{r}{r+1}}\right)
\right).
\]
In contrast, if at least one channel contains edge variables, graph-pattern complexity gives
\[
G_r(k)
=
\exp\!\left(k\log k + O(k\log\log k)\right),
\qquad k\to\infty .
\]
Thus attributed-graph invariant function spaces stabilize in the number of nodes \(d\), while their growth with interaction order \(k\) is larger than for set-valued data.

\section{Concrete Exact Transfers for Sets and Graphs}
\label{app:sets-graphs-examples}

This appendix makes the transfer assumption concrete for the two domains studied in the paper. The operator \(T_{d\to D}\) does not match arbitrary functions across dimensions; it instantiates the same invariant rule at a new set or graph size. Shared Deep Sets, GNNs, empirical measures, and graphons motivate coherence and exact target compatibility. The stable feature constructions additionally realize the linear isometry required by the theory after normalization in the relevant \(L^2\) spaces.

\subsection{Sets}

A set of \(d\) vectors is represented by \(X=(x_1,\ldots,x_d)\in\mathcal X_d=(\mathbb R^r)^d\simeq\mathbb R^{r\times d}\). Because the ordering carries no information, \(S_d\) acts by \(\pi\cdot X=(x_{\pi^{-1}(1)},\ldots,x_{\pi^{-1}(d)})\).
A set distribution is therefore represented by an exchangeable law, and both its density and its regression function are invariant by Appendix~\ref{app:group-actions}.

\paragraph{Shared Deep Sets rules.}
Let \(\phi:\mathbb R^r\to\mathbb R^m\) be an elementwise feature map and define the normalized set summary
\[
m_d^\phi(X)=\frac1d\sum_{i=1}^d\phi(x_i).
\]
For any readout \(F:\mathbb R^m\to\mathbb R\), the rule
\[
f_d(X)=F\!\left(m_d^\phi(X)\right)
\]
is invariant for every \(d\). Its transfer to size \(D\) is obtained by applying the same \(\phi\) and \(F\):
\[
(T_{d\to D}f_d)(X_1,\ldots,X_D)
=F\!\left(\frac1D\sum_{i=1}^D\phi(X_i)\right).
\]
This is precisely the size-shared computation of a mean-pooled Deep Sets model. Sum-pooled models admit the same construction when the task uses an extensive, rather than normalized, quantity. Exact target compatibility means that the population task is generated by this same rule at every size.

\paragraph{Replication and empirical measures.}
If \(D=qd\), repeat each element \(q\) times:
\[
R_{d\to D}(x_1,\ldots,x_d)
=
(\underbrace{x_1,\ldots,x_1}_{q},\ldots,
\underbrace{x_d,\ldots,x_d}_{q}).
\]
The normalized summary is unchanged,
\[
m_D^\phi(R_{d\to D}X)=m_d^\phi(X).
\]
Equivalently, replication leaves the empirical measure \(d^{-1}\sum_i\delta_{x_i}\) unchanged. Hence every function of the empirical measure, including normalized moments and mean-pooled Deep Sets rules, transfers exactly. Zero padding gives another exact construction for sum-based rules whenever the added null element satisfies \(\phi(0)=0\).

\paragraph{Stable feature expansions.}
Let \(\psi_{d,j}\) be normalized multisymmetric or invariant Hermite functions, paired across sizes by their common orbit type. For
\[
f_d=\sum_j a_j\psi_{d,j},
\]
define
\[
T_{d\to D}f_d=\sum_j a_j\psi_{D,j}.
\]
If the paired features are orthonormal in their respective task spaces, Parseval's identity gives
\[
\|T_{d\to D}f_d\|_D^2
=\sum_j a_j^2
=\|f_d\|_d^2.
\]
Thus \(T_{d\to D}\) is a linear isometry. A shared expansion may represent either a regression function or a density; in the latter case, the admissible coefficient class is restricted so that the expansion remains nonnegative and integrates to one. This provides the explicit transfer used by the set function classes in Appendix~\ref{app:sets-multisymmetric-polynomials}.

\subsection{Attributed graphs}

An attributed graph with \(d\) nodes is represented by \(X\in\mathcal X_d=\mathbb R^{r\times d\times d}\), and \(S_d\) acts by simultaneous relabeling:
\[
(\pi\cdot X)_{a,i,j}
=X_{a,\pi^{-1}(i),\pi^{-1}(j)}.
\]
Consequently, graph-level densities and regression functions depend only on the isomorphism class of the attributed graph.

\paragraph{Normalized motif rules and graph blow-ups.}
For a fixed attributed pattern \(H\), let \(t_H(X)\) denote its normalized homomorphism density in \(X\). These functions are invariant and are indexed by \(H\), not by the number of nodes. If \(X^{(q)}\) is the \(q\)-fold blow-up obtained by replacing each node by \(q\) twins and each edge by the corresponding complete bipartite block, then
\[
t_H(X^{(q)})=t_H(X).
\]
Thus graph blow-ups preserve every rule of the form
\[
f_d(X)=F\bigl(t_{H_1}(X),\ldots,t_{H_m}(X)\bigr)
\]
exactly. A step-function graphon gives the same interpretation: refining its finite partition changes the graph size but not the common graphon or its normalized motif functionals.

\paragraph{Shared GNN rules.}
A message-passing GNN applies the same local aggregation and update functions at every node and every size, followed by a permutation-invariant graph readout. Holding these functions fixed while changing \(d\) therefore defines a map from the size-\(d\) function to its size-\(D\) realization. This is the architecture-level instance of \(T_{d\to D}\). The transfer is exact when the target family is generated by the same shared GNN rule. Mean aggregation and normalized graph readouts prevent the numerical scale of the rule from growing merely with the number of nodes.

\paragraph{Stable graph feature expansions.}
Let \(\psi_{d,H}\) be normalized invariant functions indexed by unlabeled attributed multigraph patterns \(H\). Transfer pairs the same pattern across sizes:
\[
T_{d\to D}\psi_{d,H}=\psi_{D,H}.
\]
Extending this identification linearly gives
\[
T_{d\to D}\!\left(\sum_H a_H\psi_{d,H}\right)
=\sum_H a_H\psi_{D,H}.
\]
After orthonormalization in the task norms, this map is an isometry and satisfies the coherence identity. The degree-\(k\) truncations stabilize once \(d\ge2k\), as shown in Appendix~\ref{app:attributed-graph-invariants}. These expansions cover invariant graph regression and density functions without privileging a particular learning loss; for densities, admissibility again enforces nonnegativity and normalization.

\subsection{What the exact-transfer assumption asserts}

Across all of these examples, exact transfer makes two statements. First, the semantic rule is shared across sizes: the same normalized summary, architecture, motif functional, or invariant expansion is used at the training and target sizes. Second, the task family is compatible with that rule:
\[
T_{d\to D}f_d^\star=f_D^\star.
\]
This does not claim that arbitrary set or graph tasks are related. It identifies the broad class of size-consistent problems for which learning at one size can be statistically informative at another. The isometry assumption then ensures that this semantic identification does not alter the scale of estimation error. Together, these conditions isolate the statistical question studied in the paper from a separate cross-size modeling mismatch.

\section{Proof of Theorem~\ref{thm:size-generalization-oracle}}
\label{app:proof-scaling-laws}

This section proves the dimension-free upper bounds and derives all rate regimes. The proof has two steps. First, exact isometric transfer converts a fixed-size estimation guarantee into a simultaneous guarantee at every larger size. Second, representation growth is balanced against the source tail.

Exact transfer is defined in Section~\ref{sec:problem-statement} and formalized in Appendix~\ref{app:any-dimensional-models}; representation growth \(q_{\mathsf X}(k)\) and the source envelope \(B(k)\) are defined in Section~\ref{sec:main-results}. The loss assumptions in the main theorem are used below to derive the finite-dimensional estimation bound. The minimax theorem additionally uses the local nondegeneracy conditions~\eqref{eq:local-hypercube}--\eqref{eq:local-kl}.

\subsection{Finite-dimensional estimation and transfer}

Fix a domain type \(\mathsf X\in\{\mathrm{set},\mathrm{graph}\}\), a training size \(d\), and an interaction order \(k\). Let \(f_{d,k}^{\circ}\) and \(\hat f_{n,d,k}^{\mathrm{inv}}\) minimize the population and empirical risks over \(\mathcal F_{d,k}\), respectively. Uniform calibration gives
\(
\|f_{d,k}^{\circ}-f_d^\star\|_d^2\le C_{\mathrm{app}}B_d(k).
\)
Apply the strong-convexity and score-concentration calculation of Appendix~\ref{app:proof-invariant-m-estimation} in orthonormal coordinates on \(\mathcal V_{d,k}\). Since those coordinates identify Euclidean distance with \(\|\cdot\|_d\), with probability at least \(1-\delta\),
\[
\bigl\|\hat f_{n,d,k}^{\mathrm{inv}}-f_{d,k}^{\circ}\bigr\|_d^2
\le
C\frac{\sigma^2}{\alpha^2}
\frac{\dim(\mathcal V_{d,k})+\log(1/\delta)}{n}.
\]
The squared triangle inequality and the calibration bound therefore give
\[
\bigl\|\hat f_{n,d,k}^{\mathrm{inv}}-f_d^\star\bigr\|_d^2
\le
C_0\left(
B_d(k)
+\frac{\dim(\mathcal V_{d,k})+\log(1/\delta)}{n}
\right).
\tag{D.1}\label{eq:finite-oracle}
\]
Here \(C_0\) depends on the uniform calibration, curvature, and score constants, but not on \(n,d,D\), or \(k\). Thus~\eqref{eq:finite-oracle} is a consequence of the primitive loss conditions, rather than an assumed rate.

The two quantities on the right are uniform in size: by definition, \(B_d(k)\le B(k)\) and \(\dim(\mathcal V_{d,k})\le q_{\mathsf X}(k)\). Moreover, exact target transfer and isometry give, simultaneously for every \(D\ge d\),
\[
\begin{aligned}
\bigl\|T_{d\to D}\hat f_{n,d,k}^{\mathrm{inv}}-f_D^\star\bigr\|_D^2
&=
\bigl\|T_{d\to D}(\hat f_{n,d,k}^{\mathrm{inv}}-f_d^\star)\bigr\|_D^2\\
&=
\bigl\|\hat f_{n,d,k}^{\mathrm{inv}}-f_d^\star\bigr\|_d^2.
\end{aligned}
\tag{D.2}\label{eq:isometric-risk-identity}
\]
No union bound over \(D\) is needed: all target-size errors equal the same source-size error on the event in~\eqref{eq:finite-oracle}. Taking the supremum in~\eqref{eq:isometric-risk-identity} therefore yields
\[
\sup_{D\ge d}
\bigl\|T_{d\to D}\hat f_{n,d,k}^{\mathrm{inv}}-f_D^\star\bigr\|_D^2
\le
C_0\left(
B(k)+\frac{q_{\mathsf X}(k)+\log(1/\delta)}{n}
\right).
\tag{D.3}\label{eq:uniform-oracle}
\]
This is the key dimension-free oracle inequality. Object sizes \(d\) and \(D\) have disappeared; only interaction order, representation growth, and target regularity remain.

\subsection{Set rates}
\label{app:set-rate-calculations}

Write \(\beta=r/(r+1)\). The stable multisymmetric count satisfies
\[
\log q_{\mathrm{set}}(k)=C_r k^\beta+o(k^\beta),
\qquad
C_r=\frac{r+1}{r}\bigl(r\zeta(r+1)\bigr)^{1/(r+1)}.
\tag{D.4}\label{eq:set-growth-proof}
\]
We optimize the two leading terms in~\eqref{eq:uniform-oracle}; for a fixed confidence level, \(\log(1/\delta)/n\) is smaller than all rates below.

\paragraph{Polynomial source decay.}
If \(B(k)\lesssim k^{-\tau}\), balancing the two terms in~\eqref{eq:uniform-oracle} gives \(C_rk_n^\beta+\tau\log k_n=\log n+o(\log n)\), hence \(k_n=(\log n/C_r)^{1/\beta}(1+o(1))\) and
\[
\sup_{D\ge d}
\bigl\|T_{d\to D}\hat f_{n,d,k_n}^{\mathrm{inv}}-f_D^\star\bigr\|_D^2
\lesssim
(\log n)^{-\tau/\beta}
=(\log n)^{-\tau(r+1)/r}.
\tag{D.5}\label{eq:set-polynomial-upper}
\]

\paragraph{Exponential source decay.}
If \(B(k)\lesssim e^{-ak^\nu}\), the balancing equation is
\[
a k_n^\nu+C_r k_n^\beta+o(k_n^\beta)=\log n.
\tag{D.6}\label{eq:set-exp-balance}
\]
Solving it in the three regimes gives the compact summary
\[
B(k_n)+\frac{q_{\mathrm{set}}(k_n)}n
\asymp
\begin{cases}
n^{-1+o(1)}, & \nu>\beta,\\
n^{-a/(a+C_r)+o(1)}, & \nu=\beta,\\
\exp\!\left[-aC_r^{-\nu/\beta}(\log n)^{\nu/\beta}(1+o(1))\right], & 0<\nu<\beta.
\end{cases}
\tag{D.7}\label{eq:set-exp-rates}
\]
The first line is the fast set regime in Table~\ref{tab:main-minimax-rates}; the other two lines give the critical and subcritical interpolation.

\subsection{Graph rates}
\label{app:graph-rate-calculations}

For every fixed feature dimension \(r\), the stable graph-pattern count satisfies
\[
\log q_{\mathrm{graph}}(k)
=k\log k+O_r(k\log\log k)
=k\log k\,(1+o(1)).
\tag{D.8}\label{eq:graph-growth-proof}
\]
Thus \(r\) affects only the lower-order remainder and does not enter the first-order rates.

\paragraph{Polynomial source decay.}
For \(B(k)\lesssim k^{-\tau}\), the crossing equation gives \(k_n=(\log n/\log\log n)(1+o(1))\), and therefore
\[
\sup_{D\ge d}
\bigl\|T_{d\to D}\hat f_{n,d,k_n}^{\mathrm{inv}}-f_D^\star\bigr\|_D^2
\lesssim
\left(\frac{\log n}{\log\log n}\right)^{-\tau}.
\tag{D.9}\label{eq:graph-polynomial-upper}
\]

\paragraph{Exponential source decay.}
For \(B(k)\lesssim e^{-a k^\nu}\), the balancing equation is
\[
a k_n^\nu+k_n\log k_n+O_r(k_n\log\log k_n)=\log n.
\tag{D.10}\label{eq:graph-exp-balance}
\]
Solving it gives
\[
B(k_n)+\frac{q_{\mathrm{graph}}(k_n)}n
\asymp
\begin{cases}
n^{-1+o(1)}, & \nu>1,\\
\exp\!\left[-a\left(\frac{\log n}{\log\log n}\right)^\nu(1+o(1))\right], & 0<\nu\le1.
\end{cases}
\tag{D.11}\label{eq:graph-exp-rates}
\]
The second line is critical at \(\nu=1\) and subcritical for \(0<\nu<1\). Equations~\eqref{eq:set-polynomial-upper},~\eqref{eq:set-exp-rates},~\eqref{eq:graph-polynomial-upper}, and~\eqref{eq:graph-exp-rates} prove Theorem~\ref{thm:size-generalization-oracle}.

\section{Proof of Theorem~\ref{thm:minimax-size-generalization}}
\label{app:proof-minimax-size-generalization}

We now show that the preceding rates cannot be improved. The lower bound is stated under the nondegeneracy condition in the theorem, which requires the invariant coordinates counted by representation stability to correspond to statistically distinguishable local perturbations. This is the standard condition that prevents the loss or data distribution from collapsing nominally different invariant directions.

\subsection{Minimax risk and local invariant submodels}

Let \(\mathfrak F_{\mathsf X}(B)\) be the class of size-consistent target families on domain \(\mathsf X\) satisfying exact transfer and the source envelope \(B\). An estimator observes \(n\) samples at size \(d\) and returns functions \(\hat f_D\) for \(D\ge d\). Define its worst-case risk by
\[
\mathfrak R_n(\mathsf X,B)
\coloneqq
\inf_{\hat{\boldsymbol f}}
\sup_{d\ge1}
\sup_{\boldsymbol f^\star\in\mathfrak F_{\mathsf X}(B)}
\mathbb E\!\left[
\sup_{D\ge d}\|\hat f_D-f_D^\star\|_D^2
\right].
\tag{E.1}\label{eq:minimax-risk}
\]

\paragraph{Minimax nondegeneracy.}
For every interaction order \(k\), take a size \(d_k\) in the stable range: \(d_k\ge k\) for sets and \(d_k\ge2k\) for graphs. The nondegeneracy assumption provides \(m_k\ge c_0q_{\mathsf X}(k)\) orthonormal invariant directions \(\psi_1,\ldots,\psi_{m_k}\), paired coherently across larger sizes, and an admissible baseline \(f_0\) such that the hypercube
\[
f_\omega=f_0+u\sum_{j=1}^{m_k}\omega_j\psi_j,
\qquad
\omega\in\{-1,1\}^{m_k},
\tag{E.2}\label{eq:local-hypercube}
\]
belongs to the source class whenever \(m_ku^2\le c_1B(k)\). It also requires the neighboring statistical experiments to satisfy
\[
\mathrm{KL}\!\left(P_\omega^{\otimes n},P_{\omega^{(j)}}^{\otimes n}\right)
\le c_2nu^2,
\tag{E.3}\label{eq:local-kl}
\]
where \(\omega^{(j)}\) differs from \(\omega\) only in coordinate \(j\). For regression, one obtains~\eqref{eq:local-kl} by adding Gaussian noise to \(Y=f_\omega(X)\). For density estimation, it follows from a bounded local density or exponential-family perturbation around an invariant density bounded away from zero. These constructions also ensure nonnegativity and normalization of every density in~\eqref{eq:local-hypercube}.

\subsection{Assouad lower bound}

Set
\[
u_k^2=c_3\min\!\left\{\frac{B(k)}{m_k},\frac1n\right\},
\]
with \(c_3\) sufficiently small. Then the hypercube is contained in the source class, and the KL divergence between neighboring vertices is bounded by a universal constant. Since orthonormality gives \(\|f_\omega-f_{\omega^{(j)}}\|_{d_k}^2=4u_k^2\), Assouad's lemma yields
\[
\inf_{\hat f}
\sup_{\omega\in\{-1,1\}^{m_k}}
\mathbb E_\omega\|\hat f-f_\omega\|_{d_k}^2
\ge c_4m_ku_k^2
\ge c_5\min\!\left\{B(k),\frac{q_{\mathsf X}(k)}n\right\}.
\tag{E.4}\label{eq:assouad-lower}
\]
The size-generalization loss contains the choice \(D=d_k\), so its supremum is at least the in-domain loss in~\eqref{eq:assouad-lower}. Consequently,
\[
\mathfrak R_n(\mathsf X,B)
\ge
c_5\sup_{k\ge1}
\min\!\left\{B(k),\frac{q_{\mathsf X}(k)}n\right\}.
\tag{E.5}\label{eq:general-minimax-lower}
\]
This observation is important: transfer cannot remove a lower bound already present at the training size.

Because \(B(k)\) decreases and \(q_{\mathsf X}(k)\) increases, the supremum in~\eqref{eq:general-minimax-lower} is attained, up to constants and lower-order factors, where \(B(k)\asymp q_{\mathsf X}(k)/n\). Under the regular source envelopes considered here,
\[
\sup_k\min\!\left\{B(k),\frac{q_{\mathsf X}(k)}n\right\}
\asymp
\inf_k\left\{B(k)+\frac{q_{\mathsf X}(k)}n\right\}.
\tag{E.6}\label{eq:upper-lower-equivalence}
\]
The right-hand side is precisely the quantity controlled by the upper oracle inequality~\eqref{eq:uniform-oracle}.

\subsection{Evaluation of the lower bound}

For polynomial source decay, substituting the set and graph representation asymptotics into the crossing condition gives, respectively,
\[
(\log n)^{-\tau(r+1)/r}
\qquad\text{and}\qquad
\left(\frac{\log n}{\log\log n}\right)^{-\tau}.
\tag{E.7}\label{eq:polynomial-lower-rates}
\]
For exponential source decay, the same crossing equations give exactly the three set regimes in~\eqref{eq:set-exp-rates} and the two graph regimes in~\eqref{eq:graph-exp-rates}.

These lower bounds match the upper bounds in Appendix~\ref{app:proof-scaling-laws}, up to constants and lower-order factors. They apply to both regression and density estimation through the local submodels above, and they already hold at a stable training size. Therefore no estimator can uniformly improve any set or graph rate in Table~\ref{tab:main-minimax-rates}, completing the proof of Theorem~\ref{thm:minimax-size-generalization}.

\section{Auxiliary finite-dimensional estimation bound}
\label{app:proof-invariant-m-estimation}

For completeness, we derive the finite-dimensional estimation term in~\eqref{eq:finite-oracle} from restricted curvature and score concentration. Empirical optimality and strong convexity reduce estimation error to the empirical gradient at the best population fit in the finite model; no statistical rate is assumed.

\subsection{Restricted gradients on the invariant model}

Fix \(d\in\mathbb{N}\). Let \(\Theta_d^{G_d}\) be a finite-dimensional invariant parameter space and let \(\theta_d^\star\in\Theta_d^{G_d}\) denote its population-risk minimizer. In this section, \(\theta_d^\star\) is the best fit within the finite model (the analogue of \(f_{d,k}^{\circ}\)), and need not equal the unrestricted target. We write
\[
p_d^G=\dim(\Theta_d^{G_d}).
\]
For simplicity of notation, we identify \(\Theta_d^{G_d}\) locally with a Euclidean space of dimension \(p_d^G\). Equivalently, if \(\Theta_d^{G_d}\) is a smooth model, the gradient below is understood in any normalized local coordinate chart around \(\theta_d^\star\). Thus, \(\nabla_G \hat L_{n,d}(\theta_d^\star)\) denotes the gradient of the empirical risk restricted to the invariant model.

We work on the restricted-curvature event on which the empirical risk
\[
\hat L_{n,d}(\theta)=\frac1n\sum_{i=1}^n \ell_d(Z_i,\theta)
\]
is \(\alpha\)-strongly convex on \(\Theta_d^{G_d}\), with \(\alpha\) independent of \(d\). This is a primitive curvature property, not an estimation-rate assumption. For squared-loss regression in an orthonormal feature basis, the Hessian is the empirical feature Gram matrix. If the whitened feature vector is uniformly sub-Gaussian, standard covariance concentration gives this event with probability at least \(1-\delta\) once \(n\gtrsim p_d^G+\log(1/\delta)\). For a regular likelihood or smooth proper loss, the same conclusion follows from a uniformly positive population Hessian and concentration of the restricted empirical Hessian. Pointwise strong convexity of the loss is a stronger sufficient condition, but is not required.

We will use the following standard concentration bound for the restricted empirical gradient.

\begin{lemma}[Restricted score concentration]
\label{lem:restricted-gradient-concentration}
Suppose the restricted score at \(\theta_d^\star\) is sub-Gaussian with variance proxy at most \(\sigma^2\). That is, writing
\[
Y_i\coloneqq \nabla_G \ell_d(Z_i,\theta_d^\star)\in\mathbb{R}^{p_d^G},
\]
we have \(\mathbb{E}Y_i=0\) and, for every unit vector \(v\in\mathbb{R}^{p_d^G}\),
\[
\mathbb{E}\exp(\lambda\langle v,Y_i\rangle)
\le
\exp\left(\frac{\lambda^2\sigma^2}{2}\right)
\qquad
\forall \lambda\in\mathbb{R}.
\]
Then there exists an absolute constant \(C_0>0\) such that, for every \(\delta\in(0,1)\), with probability at least \(1-\delta\),
\[
\left\|
\nabla_G \hat L_{n,d}(\theta_d^\star)
\right\|_2^2
=
\left\|
\frac1n\sum_{i=1}^n Y_i
\right\|_2^2
\le
C_0\sigma^2
\frac{p_d^G+\log(1/\delta)}{n}.
\]
\end{lemma}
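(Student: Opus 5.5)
We bound the squared norm of the average \(S_n\coloneqq n^{-1}\sum_{i=1}^n Y_i\) with a standard covering (\(\varepsilon\)-net) argument. The alternative, a Hanson--Wright or trace computation, would require independence of the coordinates, which we do not have. We assume the \(Y_i\) are independent, since the \(Z_i\) are i.i.d. The plan has three steps.

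\emph{Step 1: one-dimensional sub-Gaussian tail.} For any fixed unit vector \(v\), independence and the assumed moment generating function bound give
\[
\mathbb E\exp\bigl(\lambda\langle v,S_n\rangle\bigr)=\prod_{i=1}^n\mathbb E\exp\bigl((\lambda/n)\langle v,Y_i\rangle\bigr)\le\exp\bigl(\lambda^2\sigma^2/(2n)\bigr).
\]
So \(\langle v,S_n\rangle\) is sub-Gaussian with variance proxy \(\sigma^2/n\). The Chernoff bound then yields
\[
\mathbb P\bigl(\langle v,S_n\rangle>t\bigr)\le\exp\bigl(-nt^2/(2\sigma^2)\bigr).
\]

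\emph{Step 2: discretize the sphere.} Let \(\mathcal N\) be a \(1/2\)-net of the unit sphere in \(\mathbb R^{p_d^G}\), so \(|\mathcal N|\le 5^{p_d^G}\) by the volumetric bound. For any vector \(w\), choose \(v^\ast\) on the sphere with \(\|w\|_2=\langle v^\ast,w\rangle\), and pick \(v\in\mathcal N\) with \(\|v-v^\ast\|\le1/2\). Then
\[
\|w\|_2\le\langle v,w\rangle+\tfrac12\|w\|_2,
\qquad\text{hence}\qquad
\|w\|_2\le 2\max_{v\in\mathcal N}\langle v,w\rangle.
\]

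\emph{Step 3: union bound.} Apply Step 1 to each \(v\in\mathcal N\) and take a union bound:
\[
\mathbb P\bigl(\|S_n\|_2>2t\bigr)\le 5^{p_d^G}\exp\bigl(-nt^2/(2\sigma^2)\bigr).
\]
Setting the right side equal to \(\delta\) gives
\[
t^2=\frac{2\sigma^2\bigl(p_d^G\log 5+\log(1/\delta)\bigr)}{n}.
\]
Hence, with probability at least \(1-\delta\),
\[
\|S_n\|_2^2\le 4t^2\le C_0\sigma^2\,\frac{p_d^G+\log(1/\delta)}{n},
\qquad C_0=8\log 5 .
\]
Here we used \(\log 5\ge1\) to absorb the constant on \(\log(1/\delta)\). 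Finally, the identity \(\nabla_G\hat L_{n,d}(\theta_d^\star)=S_n\) holds because the gradient is linear in the empirical average.

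The main step is the net reduction in Step 2. It is what produces the additive \(p_d^G\) term rather than a multiplicative \(p_d^G\log(1/\delta)\). A naive union bound over coordinates would lose this structure, so the point to check carefully is the factor \(2\) in Step 2 and that the constant stays absolute, independent of \(\sigma\), \(d\) and \(p_d^G\).
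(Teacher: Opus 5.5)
Your proposal is correct and follows essentially the same route as the paper: sub-Gaussianity of the average with proxy $\sigma^2/n$, a $1/2$-net $\mathcal N$ with $|\mathcal N|\le 5^{p_d^G}$ and $\|w\|_2\le 2\max_{v\in\mathcal N}\langle v,w\rangle$, a union bound, and the choice $t^2=2\sigma^2(p_d^G\log 5+\log(1/\delta))/n$, which gives $8\sigma^2(p_d^G\log 5+\log(1/\delta))/n$. Your explicit constant $C_0=8\log 5$ is a valid instance of the paper's absolute constant.
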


\begin{proof}
Let \(p=p_d^G\) and define
\[
\bar Y_n\coloneqq \frac1n\sum_{i=1}^n Y_i .
\]
Since the \(Y_i\)'s are independent, mean-zero, and \(\sigma^2\)-sub-Gaussian, the average \(\bar Y_n\) is sub-Gaussian with variance proxy \(\sigma^2/n\). Hence for every unit vector \(v\in\mathbb{R}^p\),
\[
\mathbb{E}\exp(\lambda\langle v,\bar Y_n\rangle)
\le
\exp\left(\frac{\lambda^2\sigma^2}{2n}\right).
\]

Let \(\mathcal N\) be a \(1/2\)-net of the unit sphere \(\mathbb S^{p-1}\) with \(|\mathcal N|\le 5^p\). For every \(u\in\mathbb{R}^p\),
\[
\|u\|_2\le 2\max_{v\in\mathcal N}\langle v,u\rangle .
\]
Therefore, for any \(t>0\),
\[
\mathbb{P}(\|\bar Y_n\|_2\ge 2t)
\le
\mathbb{P}\left(\max_{v\in\mathcal N}\langle v,\bar Y_n\rangle\ge t\right).
\]
By a union bound and the one-dimensional sub-Gaussian tail bound,
\[
\mathbb{P}\left(\max_{v\in\mathcal N}\langle v,\bar Y_n\rangle\ge t\right)
\le
|\mathcal N|
\exp\left(-\frac{nt^2}{2\sigma^2}\right)
\le
\exp\left(p\log 5-\frac{nt^2}{2\sigma^2}\right).
\]
Taking
\[
t=\sigma\sqrt{\frac{2(p\log 5+\log(1/\delta))}{n}},
\]
we obtain, with probability at least \(1-\delta\),
\[
\|\bar Y_n\|_2
\le
2\sigma\sqrt{\frac{2(p\log 5+\log(1/\delta))}{n}}.
\]
Squaring both sides yields
\[
\|\bar Y_n\|_2^2
\le
8\sigma^2
\frac{p\log 5+\log(1/\delta)}{n}.
\]
Thus the claim holds with an absolute constant \(C_0\).
\end{proof}

\subsection{Proof of the estimation bound}

Let
\[
\hat\theta_{n,d}^{G}
\in
\arg\min_{\theta\in\Theta_d^{G_d}}\hat L_{n,d}(\theta).
\]
Since \(\theta_d^\star\in\Theta_d^{G_d}\), empirical optimality gives
\[
\hat L_{n,d}(\hat\theta_{n,d}^{G})
\le
\hat L_{n,d}(\theta_d^\star).
\]
On the intersection of the restricted-curvature event and the score-concentration event, \(\hat L_{n,d}\) is \(\alpha\)-strongly convex on \(\Theta_d^{G_d}\), and hence
\[
\hat L_{n,d}(\hat\theta_{n,d}^{G})
\ge
\hat L_{n,d}(\theta_d^\star)
+
\left\langle
\nabla_G \hat L_{n,d}(\theta_d^\star),
\hat\theta_{n,d}^{G}-\theta_d^\star
\right\rangle
+
\frac{\alpha}{2}
\left\|
\hat\theta_{n,d}^{G}-\theta_d^\star
\right\|_2^2.
\]
Combining the previous two displays gives
\[
0
\ge
\left\langle
\nabla_G \hat L_{n,d}(\theta_d^\star),
\hat\theta_{n,d}^{G}-\theta_d^\star
\right\rangle
+
\frac{\alpha}{2}
\left\|
\hat\theta_{n,d}^{G}-\theta_d^\star
\right\|_2^2.
\]
By Cauchy--Schwarz,
\[
0
\ge
-
\left\|
\nabla_G \hat L_{n,d}(\theta_d^\star)
\right\|_2
\left\|
\hat\theta_{n,d}^{G}-\theta_d^\star
\right\|_2
+
\frac{\alpha}{2}
\left\|
\hat\theta_{n,d}^{G}-\theta_d^\star
\right\|_2^2.
\]
If \(\hat\theta_{n,d}^{G}=\theta_d^\star\), the desired bound is immediate. Otherwise, dividing by
\(\|\hat\theta_{n,d}^{G}-\theta_d^\star\|_2\) gives
\[
\left\|
\hat\theta_{n,d}^{G}-\theta_d^\star
\right\|_2
\le
\frac{2}{\alpha}
\left\|
\nabla_G \hat L_{n,d}(\theta_d^\star)
\right\|_2.
\]
Taking each event with failure probability \(\delta/2\), squaring, applying Lemma~\ref{lem:restricted-gradient-concentration}, and using a union bound, we obtain with probability at least \(1-\delta\),
\[
\left\|
\hat\theta_{n,d}^{G}-\theta_d^\star
\right\|_2^2
\le
\frac{4}{\alpha^2}
\left\|
\nabla_G \hat L_{n,d}(\theta_d^\star)
\right\|_2^2
\le
C
\frac{\sigma^2}{\alpha^2}
\frac{p_d^G+\log(1/\delta)}{n},
\]
for an absolute constant \(C>0\). This is the finite-dimensional estimation bound used in~\eqref{eq:finite-oracle}. When the coordinates are orthonormal in \(\mathcal H_d\), Euclidean parameter distance equals function-space distance. If the population best fit additionally satisfies \(\|f_{d,k}^{\circ}-f_d^\star\|_d^2\le C_{\mathrm{app}}B_d(k)\), the squared triangle inequality gives~\eqref{eq:finite-oracle} directly.

If
\[
p^G\coloneqq \sup_d p_d^G<\infty,
\]
then \(p_d^G\le p^G\) for every \(d\), and therefore
\[
\left\|
\hat\theta_{n,d}^{G}-\theta_d^\star
\right\|_2^2
\le
C
\frac{\sigma^2}{\alpha^2}
\frac{p^G+\log(1/\delta)}{n}.
\]

\subsection{Proof of the transfer bound}

Fix a source dimension \(d\). Suppose that for every \(D\ge d\),
\[
\Phi_{d\to D}\theta_d^\star=\theta_D^\star,
\]
and that \(\Phi_{d\to D}\) is distance preserving on the invariant model:
\[
\left\|
\Phi_{d\to D}\theta-\Phi_{d\to D}\theta'
\right\|_2
=
\left\|
\theta-\theta'
\right\|_2
\qquad
\forall \theta,\theta'\in\Theta_d^{G_d}.
\]
Then, for every \(D\ge d\),
\[
\begin{aligned}
\left\|
\Phi_{d\to D}\hat\theta_{n,d}^{G}
-
\theta_D^\star
\right\|_2
&=
\left\|
\Phi_{d\to D}\hat\theta_{n,d}^{G}
-
\Phi_{d\to D}\theta_d^\star
\right\|_2 \\
&=
\left\|
\hat\theta_{n,d}^{G}
-
\theta_d^\star
\right\|_2.
\end{aligned}
\]
Taking the supremum over \(D\ge d\), we get
\[
\sup_{D\ge d}
\left\|
\Phi_{d\to D}\hat\theta_{n,d}^{G}
-
\theta_D^\star
\right\|_2^2
=
\left\|
\hat\theta_{n,d}^{G}
-
\theta_d^\star
\right\|_2^2.
\]
Combining this identity with the estimation bound above yields
\[
\sup_{D\ge d}
\left\|
\Phi_{d\to D}\hat\theta_{n,d}^{G}
-
\theta_D^\star
\right\|_2^2
\le
C
\frac{\sigma^2}{\alpha^2}
\frac{p_d^G+\log(1/\delta)}{n}
\]
with probability at least \(1-\delta\). If \(p^G=\sup_d p_d^G<\infty\), the right-hand side is bounded by
\[
C
\frac{\sigma^2}{\alpha^2}
\frac{p^G+\log(1/\delta)}{n}.
\]
This completes the auxiliary transfer calculation.

\section{Equivariant Field Specialization for Sets}
\label{app:proof-score-matching-sets}

Score matching and denoising replace invariant scalar functions by permutation-equivariant vector fields, but the proof is otherwise unchanged. Let \(\{\psi_{d,j}\}_{j\ge1}\) be an orthonormal equivariant basis and write
\[
s_d^\star=\sum_{j\ge1}\theta_{d,j}^\star\psi_{d,j},
\qquad
\|\hat s_d-s_d^\star\|_d^2
=\sum_{j\ge1}(\hat\theta_{d,j}-\theta_{d,j}^\star)^2.
\]
If \(\mathcal I_{d,k}\) contains the features of degree at most \(k\), then its stable dimension and the omitted coefficient energy are
\[
m_r(k)=|\mathcal I_{d,k}|
=\exp\!\left(C_rk^{r/(r+1)}+o\!\left(k^{r/(r+1)}\right)\right),
\qquad
B_d(k)=\sum_{j\notin\mathcal I_{d,k}}(\theta_{d,j}^\star)^2.
\]
Orthogonality and the auxiliary finite-dimensional bound in Appendix~\ref{app:proof-invariant-m-estimation} give, with probability at least \(1-\delta\),
\[
\|\hat s_{n,d}^{(k)}-s_d^\star\|_d^2
\le
B_d(k)+
C\frac{\sigma^2}{\alpha^2}
\frac{m_r(k)+\log(1/\delta)}{n}.
\tag{G.1}\label{eq:set-score-oracle}
\]
Thus the scalar proof applies verbatim. Uniform polynomial decay \(B_d(k)\lesssim k^{-\tau}\) gives \((\log n)^{-\tau(r+1)/r}\). Under uniform exponential decay \(B_d(k)\lesssim e^{-ak^\nu}\), with \(\beta=r/(r+1)\),
\[
\inf_k\left\{B_d(k)+\frac{m_r(k)}n\right\}
\asymp
\begin{cases}
n^{-1+o(1)}, & \nu>\beta,\\
n^{-a/(a+C_r)+o(1)}, & \nu=\beta,\\
\exp\!\left[-aC_r^{-\nu/\beta}(\log n)^{\nu/\beta}(1+o(1))\right], & 0<\nu<\beta.
\end{cases}
\tag{G.2}\label{eq:set-score-rates}
\]
Because the equivariant transfer operator is isometric, these bounds hold simultaneously at every target size \(D\ge d\).

\section{Equivariant Field Specialization for Graphs}
\label{app:proof-score-matching-graphs}

Let \(\mathcal J_{d,k}\) contain the degree-at-most-\(k\) equivariant graph features, and write \(M_r(k)=|\mathcal J_{d,k}|\). For fixed channel dimension \(r\),
\[
M_r(k)=\exp\!\left(k\log k+O_r(k\log\log k)\right),
\qquad
B_d^{\mathrm{graph}}(k)
=\sum_{j\notin\mathcal J_{d,k}}(\theta_{d,j}^\star)^2.
\]
The same orthogonal decomposition and finite-dimensional estimate yield
\[
\|\hat s_{n,d}^{(k)}-s_d^\star\|_d^2
\le
B_d^{\mathrm{graph}}(k)+
C\frac{\sigma^2}{\alpha^2}
\frac{M_r(k)+\log(1/\delta)}{n}.
\tag{H.1}\label{eq:graph-score-oracle}
\]
Consequently, polynomial decay \(B_d^{\mathrm{graph}}(k)\lesssim k^{-\tau}\) gives \((\log n/\log\log n)^{-\tau}\), while exponential decay \(B_d^{\mathrm{graph}}(k)\lesssim e^{-ak^\nu}\) gives
\[
\inf_k\left\{B_d^{\mathrm{graph}}(k)+\frac{M_r(k)}n\right\}
\asymp
\begin{cases}
n^{-1+o(1)}, & \nu>1,\\
\exp\!\left[-a\left(\dfrac{\log n}{\log\log n}\right)^\nu(1+o(1))\right], & 0<\nu\le1.
\end{cases}
\tag{H.2}\label{eq:graph-score-rates}
\]
The channel dimension affects only the lower-order term in \(M_r(k)\), so it does not enter these first-order rates. Isometric equivariant transfer again makes every bound simultaneous over \(D\ge d\).

\section{Experiments}

The experiments isolate two complementary parts of the theory rather than attempting a broad architectural benchmark. The first asks whether an invariant representation learned at one graph size retains its meaning at larger sizes. The second asks what remains when transfer is exact but the target must be approximated by a low-degree truncation. Together they separate representation-induced transfer error from approximation error due to the source tail.

\subsection{Subgraph counting}
We include a small synthetic experiment illustrating the role of invariant low-degree structure in transfer across graph sizes. The purpose of the experiment is not to benchmark a new architecture, but to visualize the basic mechanism behind our theory: a low-degree invariant statistic has a stable meaning across graph sizes, while a non-invariant coordinate polynomial does not transfer canonically.

We generate graphs from an Erdős--Rényi model with a graph-dependent edge probability. For each sample, we draw
\[
p \sim \mathrm{Unif}(0.15,0.85),
\qquad
A \sim G(d,p),
\]
where \(A\in\{0,1\}^{d\times d}\) is the adjacency matrix of an undirected graph with no self-loops. The response is the population triangle density $y \approx p^3$.
Thus, the target is a low-degree graph moment, but the learner observes only the sampled graph \(A\), not the latent parameter \(p\). We train all models only on graphs with \(d=10\) nodes and evaluate them on larger graphs with
\[
D \in \{25,30,35,40,50,60,75,100,125,150,175,200\}.
\]

We compare two predictors. The first is an invariant polynomial model using normalized graph statistics
\[
\phi_{\mathrm{inv},d}(A)
=
\big(E_d(A),W_d(A),T_d(A)\big),
\]
where \(E_d(A)\) is the edge density, \(W_d(A)\) is the normalized wedge density, and \(T_d(A)\) is the normalized triangle density. A linear model is fit on these features at \(d=10\), and transfer to size \(D\) is performed by evaluating the same normalized invariant statistics on the larger graph:
\[
\widehat f_{\mathrm{inv},D}(A)
=
\widehat\beta_0
+
\widehat\beta_1 E_D(A)
+
\widehat\beta_2 W_D(A)
+
\widehat\beta_3 T_D(A).
\]
This gives a direct size-transfer rule because the features are invariant graph moments whose interpretation is stable across graph sizes.

The second predictor is a non-invariant degree-three polynomial baseline. At the training size \(d=10\), we use the raw triangle indicators
\[
A_{ij}A_{ik}A_{jk},
\qquad
1\le i<j<k\le 10,
\]
as features and fit a linear model. Such a model assigns separate coefficients to labeled triangles and therefore has no canonical extension to graphs with more than ten nodes. To give this baseline a favorable transfer rule, we average it over random induced subgraphs: for a test graph on \(D\) nodes, we sample subsets \(S_1,\dots,S_M\subseteq [D]\) of size ten and define
\[
\widehat f_{\mathrm{raw},D}^{\mathrm{avg}}(A)
=
\frac{1}{M}
\sum_{m=1}^M
\widehat f_{\mathrm{raw},10}(A_{S_m\times S_m}).
\]
This is a Monte Carlo symmetrization of the raw non-invariant predictor. It deliberately gives the baseline an advantageous test-time correction: without averaging over induced subgraphs, its coordinates refer only to the ten training labels and do not define a predictor on a larger graph. The comparison therefore measures whether invariance is built into the learned representation or reconstructed afterwards by explicit averaging.

Figure~\ref{fig:experiments}(a) shows the test mean-squared error as the test graph size increases. The invariant polynomial model transfers directly and improves with graph size, since its normalized moments use all available nodes and become more accurate estimates of the underlying population quantities. The raw model also improves after subgraph averaging, but remains consistently worse despite this favorable correction. Thus, low polynomial degree alone is not the operative property: transfer comes from features whose invariant meaning is stable across sizes. The experiment also clarifies that larger test graphs need not be intrinsically harder; once the representation is size-consistent, additional nodes can provide more information about the same latent graph statistics.

\subsection{Equivariant field transfer on weighted graphs}

We next illustrate equivariant field estimation in a controlled setting where the target is known analytically. Continuous weighted graphs are represented by symmetric matrices
\[
A \in \mathbb R^{d\times d},
\]
with zero diagonal and independent standard-Gaussian upper-triangular entries. We regress onto the permutation-equivariant field
\[
s_d^\star(A)
=
-A
-
\sum_{q=2}^{8}
b_q q d^{-q} A^{q-1}.
\]
The features \(q d^{-q}A^{q-1}\) are gradients of normalized invariant spectral statistics \(d^{-q}\operatorname{tr}(A^q)\). Their meaning is shared across graph sizes, and the same coefficient \(b_q\) therefore defines the same rule at every size. We compare a fast-decay target,
\[
b_q = 2\exp(-0.14q^2),
\]
with a slow-decay target,
\[
b_q = 2q^{-1/2},
\]
for which the omitted high-order components remain substantial.

For each regime, we fit the degree-three equivariant model
\[
\widehat s_d(A)
=
-A-\sum_{q=2}^{4}\widehat b_q q d^{-q}A^{q-1}
\]
by least squares on the off-diagonal field entries of 200 independently sampled graphs with \(d=10\). Transfer to size \(D\) reuses the learned coefficients \(\widehat b_q\) and evaluates the corresponding normalized features \(qD^{-q}A^{q-1}\). We test on 16 independent graphs at each
\[
D \in \{10,15,20,30,40,60,80\}.
\]
Because the common linear term \(-A\) is represented exactly, we report error relative to the energy of the nonlinear target component:
\[
\operatorname{RelMSE}_D
=
\frac{\sum_A\sum_{i<j}(\widehat s_D(A)_{ij}-s_D^\star(A)_{ij})^2}
{\sum_A\sum_{i<j}(s_D^\star(A)_{ij}+A_{ij})^2}.
\]
The full procedure is repeated for ten independent seeds, and the shaded bands in Figure~\ref{fig:experiments}(b) report 95\% confidence intervals across these repetitions. No parameters are refit or calibrated at the target sizes: the only size-dependent operation is evaluating the analytically normalized features at (D).

Figure~\ref{fig:experiments}(b) shows that the same learned rule remains stable across all tested sizes. The absence of systematic deterioration with (D) is the empirical signature of exact equivariant transfer in this example. What remains is truncation error: fast decay makes the degree-three approximation highly accurate, whereas slow decay leaves a persistent gap roughly two to three orders of magnitude larger. This separates the two roles appearing in the theory---normalization controls size mismatch, while coefficient decay determines how much signal lies beyond the fitted degree. The experiment is illustrative rather than a test of the asymptotic minimax rates, which additionally depend on the full growth of invariant feature spaces.

\begin{figure}[!t]
    \centering
    \begin{minipage}[t]{0.485\linewidth}
        \centering
        \includegraphics[width=\linewidth]{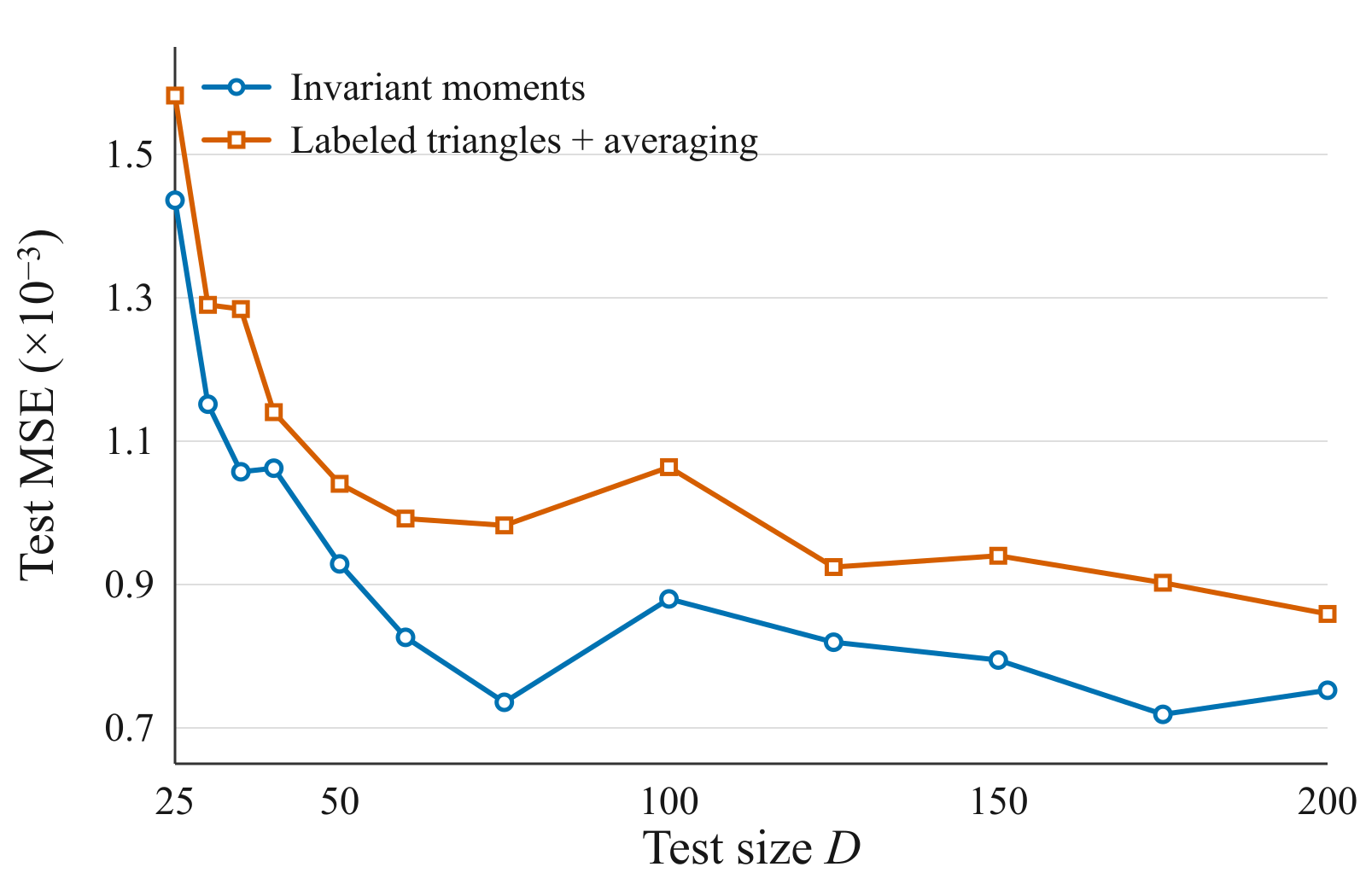}
        \vspace{0.2em}

        \small\textbf{(a)} Invariant versus symmetrized raw features.
    \end{minipage}\hfill
    \begin{minipage}[t]{0.485\linewidth}
        \centering
        \includegraphics[width=\linewidth]{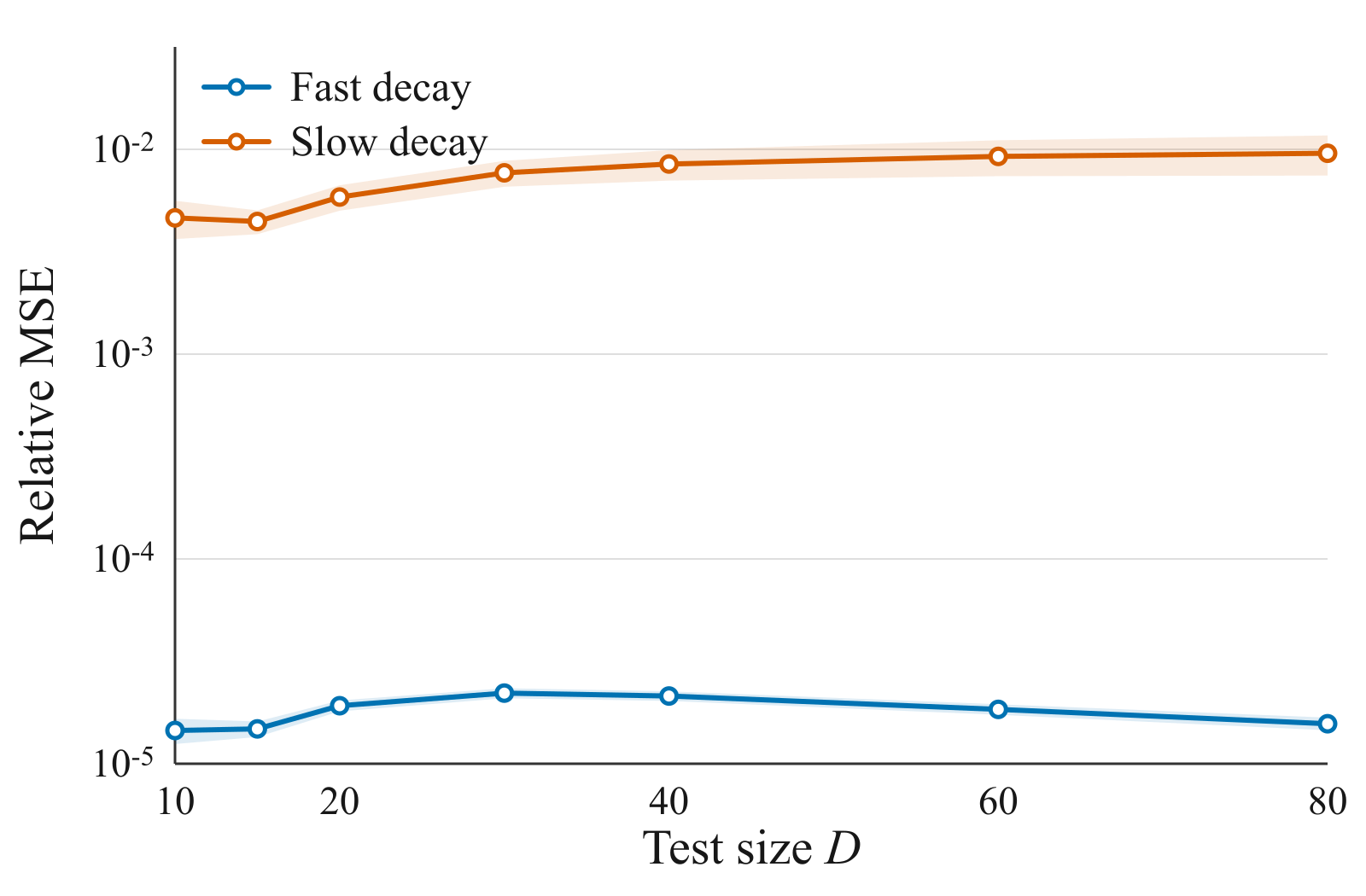}
        \vspace{0.2em}

        \small\textbf{(b)} Fast versus slow coefficient decay.
    \end{minipage}
    \caption{Two mechanisms governing transfer across graph sizes. \textbf{(a)} Transfer from 10-node Erd\H{o}s--R\'enyi graphs: normalized invariant moments transfer directly, whereas labeled triangle features require explicit subgraph averaging. \textbf{(b)} Exact equivariant transfer from 10-node weighted graphs using size-normalized features: curves show mean relative MSE over ten repetitions, with 95\% confidence intervals. Fast coefficient decay yields uniformly smaller truncation error.}
    \label{fig:experiments}
\end{figure}

\section{Notation}
\label{app:notation}

\begin{center}
\small
\setlength{\tabcolsep}{6pt}
\renewcommand{\arraystretch}{1.12}
\begin{tabular}{@{}p{0.25\linewidth}p{0.67\linewidth}@{}}
\toprule
\textbf{Symbol} & \textbf{Meaning} \\
\midrule
\(d,D\) & Training object size and target object size, with \(D\ge d\). \\
\(n\) & Number of training samples observed at size \(d\). \\
\(r\) & Element-feature dimension for sets or channel dimension for attributed graphs. \\
\(\mathcal X_d\) & Size-\(d\) input space: \(\mathbb R^{r\times d}\) for sets or \(\mathbb R^{r\times d\times d}\) for graphs. \\
\(Z_i\) & Unified observation: \(X_i\) for density estimation and \((X_i,Y_i)\) for regression. \\
\(\lambda_d\) & Standard Gaussian reference measure on \(\mathcal X_d\). \\
\(\mu_d,\rho_d\) & Input distribution and joint distribution of \((X,Y)\), respectively. \\
\(f_d^\star\) & Population target at size \(d\): a density relative to \(\lambda_d\) or a regression function. \\
\(\mathcal H_d\) & \(L^2(\lambda_d)\) for density estimation or \(L^2(\mu_d)\) for regression. \\
\(\|\cdot\|_d\) & The corresponding norm on \(\mathcal H_d\). \\
\(\mathcal F_d,\mathcal F_d^{\mathrm{inv}}\) & Statistical function class and its permutation-invariant subclass. \\
\(\ell_d,\widehat L_{n,d},L_d\) & Pointwise loss, empirical risk, and population risk. \\
\(S_d\) & Symmetric group acting by relabeling set elements or graph nodes. \\
\(\hat f_{n,d}^{\mathrm{inv}}\) & Invariant empirical risk minimizer learned at size \(d\). \\
\(T_{d\to D}\) & Exact coherent isometric transfer operator from size \(d\) to size \(D\). \\
\(k\) & Polynomial degree or interaction order. \\
\(\mathcal V_{d,k}\) & Invariant polynomials of degree at most \(k\) in \(\mathcal H_d\). \\
\(\mathcal F_{d,k}\) & Order-\(k\) approximation class; it also enforces density admissibility when required. \\
\(f_{d,k}^{\circ},\hat f_{n,d,k}^{\mathrm{inv}}\) & Population and empirical risk minimizers over \(\mathcal F_{d,k}\). \\
\(B_d(k),B(k)\) & Best order-\(k\) approximation error at size \(d\) and its supremum over sizes. \\
\(q_{\mathsf X}(k)\) & Stable dimension \(\sup_d\dim(\mathcal V_{d,k})\) for \(\mathsf X\in\{\mathrm{set},\mathrm{graph}\}\). \\
\(N_r(k),G_r(k)\) & Stable homogeneous degree-\(k\) invariant-feature counts for sets and graphs. \\
\(\tau\) & Polynomial source-decay exponent: \(B(k)\lesssim k^{-\tau}\). \\
\(a,\nu\) & Exponential source-decay constant and exponent: \(B(k)\lesssim e^{-ak^\nu}\). \\
\(\beta\) & Set representation-growth exponent \(r/(r+1)\). \\
\(C_r\) & Set growth constant \(\frac{r+1}{r}\bigl(r\zeta(r+1)\bigr)^{1/(r+1)}\). \\
\(\delta\) & Failure probability in high-probability bounds. \\
\(\mathfrak F_{\mathsf X}(B)\) & Size-consistent source class used in the minimax theorem. \\
\(\mathfrak R_n(\mathsf X,B)\) & Worst-case minimax size-generalization risk. \\
\(s_d^\star,\theta_{d,j}^\star\) & Equivariant field target and its coefficients in the optional score or denoising specialization. \\
\(\mathcal I_{d,k},\mathcal J_{d,k}\) & Degree-at-most-\(k\) equivariant feature indices for sets and graphs. \\
\(m_r(k),M_r(k)\) & Stable dimensions of the set and graph equivariant feature spaces. \\
\(\alpha,\sigma\) & Uniform curvature constant and gradient-noise scale in the auxiliary estimation bound. \\
\(C,c,c_i\) & Positive constants independent of \(n,d,D\); their values may change between displays. \\
\bottomrule
\end{tabular}
\end{center}


\end{document}

%% file: math_commands.tex
\usepackage{amsmath,amsfonts,bm}

\def\eqref#1{equation~\ref{#1}}

\def\1{\bm{1}}

\DeclareMathAlphabet{\mathsfit}{\encodingdefault}{\sfdefault}{m}{sl}
\SetMathAlphabet{\mathsfit}{bold}{\encodingdefault}{\sfdefault}{bx}{n}

